\theoremstyle{plain}
\newtheorem{theorem}{Theorem}[section]
\theoremstyle{definition}
\newtheorem{definition}[theorem]{Definition}
\theoremstyle{remark}
\newcommand{\padspace}{\hspace{3.5em}}
\DeclareDocumentCommand{\p}{ D<>{p} D<>{} r() }{
  \def\content{#3}\patchcmd{\content}{|}{\;#2\vert\;}{}{}
  \ensuremath{#1 #2(\content #2)}}
\DeclareDocumentCommand{\pp}{ D<>{} r() }{
\ensuremath{\p<p_\phi><#1>(#2)}}
\DeclareDocumentCommand{\qp}{ D<>{} r() }{
\ensuremath{\p<q_\phi><#1>(#2)}}
\icmltitlerunning{Learning Latent Dynamic Robust Representations for World Models}
\begin{document}

\twocolumn[
\icmltitle{Learning Latent Dynamic Robust Representations for World Models}



\icmlsetsymbol{equal}{*}
\icmlsetsymbol{correspond}{$^\dagger$}

\begin{icmlauthorlist}
\icmlauthor{Ruixiang Sun}{equal,yyy}
\icmlauthor{Hongyu Zang}{equal,yyy}
\icmlauthor{Xin Li}{correspond,yyy}
\icmlauthor{Riashat Islam}{sch}
\end{icmlauthorlist}

\icmlaffiliation{yyy}{Beijing Institute of Technology, China}
\icmlaffiliation{sch}{DreamFold AI, Canada}

\icmlcorrespondingauthor{Xin Li}{xinli@bit.edu.cn}

\icmlkeywords{Machine Learning, ICML}

\vskip 0.3in
]



\printAffiliationsAndNotice{\icmlEqualContribution} 

\begin{abstract}
Visual Model-Based Reinforcement Learning (MBRL) promises to encapsulate agent's knowledge about the underlying dynamics of the environment, enabling learning a world model as a useful planner. However, top MBRL agents such as Dreamer often struggle with visual pixel-based inputs in the presence of exogenous or irrelevant noise in the observation space, due to failure to capture task-specific features while filtering out irrelevant spatio-temporal details. To tackle this problem,  we apply a spatio-temporal masking strategy, a bisimulation principle, combined with latent reconstruction, to capture endogenous task-specific aspects of the environment for world models, effectively eliminating non-essential information. Joint training of representations, dynamics, and policy often leads to instabilities. To further address this issue, we develop a Hybrid Recurrent State-Space Model (HRSSM) structure, enhancing state representation robustness for effective policy learning. Our empirical evaluation demonstrates significant performance improvements over existing methods in a range of visually complex control tasks such as Maniskill~\cite{gu2023maniskill2} with exogenous distractors from the Matterport environment. Our code is avaliable at \url{https://github.com/bit1029public/HRSSM}.




\end{abstract}
\vspace{-4mm}
\section{Introduction}



Model-Based Reinforcement Leanring (MBRL) utilizes predictive models to capture endogenous dynamics of the world, to be able to simulate and forecast future scenarios, enhancing the agent's decision making abilities by leveraging imagination and prediction in visual pixel-based contexts ~\cite{hafner2019dream, kalweit2017uncertainty, hafner2019learning, ha2018recurrent, janner2020gamma}. Most importantly, these world models such as recurrent state-space model (RSSM)~\cite{hafner2019learning}, enable agents to understand and represent dynamics in the learnt representation space, consisting of task specific information with the hope to have filtered out exogenous or irrelevant aspects from the observations, leading to superior performance compared to model-free RL algorithms. However, most MBRL methods face challenges in environments with large amounts of unpredictable or irrelevant exogenous observations~\cite{burda2018exploration, efroni2022sample, efroni2021provably}.

\begin{figure}[t]
\centering
\includegraphics[width=0.95\linewidth]{./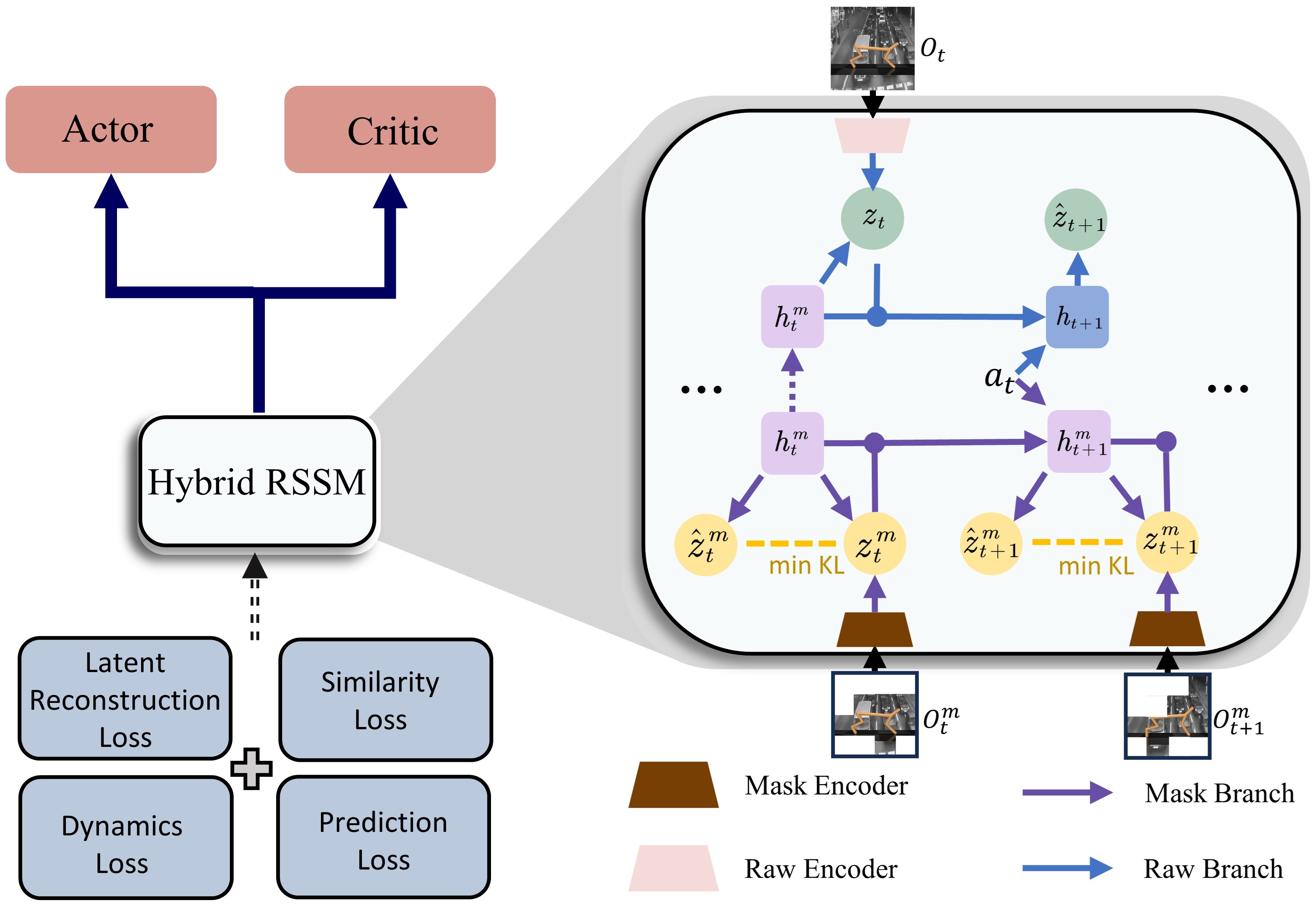}
    \caption{Our framework is composed of a Hybrid-RSSM and actor-critic architecture. Hybrid-RSSM learns robust representations and dynamics through four distinct objectives: latent reconstruction, which aligns features between masked and raw observations; similarity loss based on the bisimulation principle; and two additional objectives same as in Dreamer series~\cite{hafner2020mastering,hafner2023mastering}. \vspace{-5mm} }
\label{fig:model}
\end{figure}

Arguably, the Dreamer series of algorithms ~\cite{hafner2019dream,hafner2020mastering,hafner2023mastering} are probably the most effective and representative class of MBRL approaches 
where agents learn representations and dynamics in latent space by minimizing reconstruction errors. Most MBRL approaches such as Dreamer often includes a forward dynamics model to predict observations and a reward model that evaluates the potential of future states. Recent works however have shown the ineffectiveness of forward dynamics based models when learning from exogenous observation based visual inputs \cite{efroni2022sample, lamb2022guaranteed, islam2022agent}. This is because in noisy environments, emphasis on reconstruction can lead to disproportionate focus on irrelevant details such as textures or noise, at the expense of smaller but task-relevant elements. This can result in inaccuracies in the dynamics model~\cite{xiao2019learning, asadi2019combating} and overfitting to specific environmental traits~\cite{zhang2020invariant}, leading to compounded errors in latent space world models for planning. 

While a body of work has addressed exogenous noise, primarily in reward-free~\cite{efroni2021provably, efroni2022sample, lamb2022guaranteed} or offline visual settings for model-free RL~\cite{islam2022agent}, only limited research has explored model-based agents in the context of exogenous noise. These studies have developed in a way of decoder-free matter, \textit{i.e.}, excluding pixel-level reconstruction, to mitigate reconstruction issues, but they still face significant challenges: either lacking in capturing task-specific information~\cite{deng2022dreamerpro,okada2021dreaming}, not being robust against various noise types~\cite{fu2021learning}, or sensitive to hyper-parameters~\cite{DBLP:journals/corr/abs-2309-00082, islamrepro}. This work is therefore primarily driven the question :

\textit{How to learn sufficiently expressive state representation for a world model without the reliance of the pixel-level reconstruction?}

In principle, the ideal representation objective for model-based planners should address two desired criterion :  i) effectively capturing task-relevant endogenous dynamics information, and ii) be robust and compact enough to filter exogenous task irrelevant details. Despite several prior works trying to address this \cite{lamb2022guaranteed, efroni2021provably, islam2022agent} in reward free settings, these works do not show effectiveness of the learnt representation for use in world models. We address this question through the promising approach of bisimulation principle~\cite{ferns2011bisimulation,castro2020scalable, zhang2020learning, castro2021mico, zang2022simsr}, learning representations specific to task objectives that can reflect state behavioral similarities. However, the effectiveness of the bisimulation metric heavily relies on the accuracy of the dynamics model~\cite{kemertas2021towards}. Under an approximate dynamics model, the state representation guided by the bisimulation principle may be task-specific but not necessarily compact, indicating a gap in the bisimulation principle's ability to foster expressive state representations for robust model-based agents.


To effectively apply bisimulation principle in world models, we propose to develop a new architecture - the Hybrid-RSSM (HRSSM).
 This architecture employs a masking strategy to foster more compact latent representations, specifically targeting the integration of the bisimulation principle to improve the efficiency and effectiveness of the model.  Our Hybrid-RSSM consists of two branches: 1) the raw branch, which processes original interaction sequences, and 2) the mask branch, which handles sequences that have been transformed using a masking strategy. This masking, involving cubic sampling of observation sequences, is designed to reduce spatio-temporal redundancy in natural signals. A key feature of our approach is the reconstruction of masked observations to match the latent features from the raw branch in the latent representation space, not in pixel space. This ensures semantic  alignment for both branches. Meanwhile, we incorporate a similarity-based objective, in line with the bisimulation principle, to integrate differences in immediate rewards and dynamics into the state representations. 
 
 Furthermore, to enhance training stability and minimize potential representation drift, the raw and mask branches share a unified historical information representation. This holistic structure defines our Hybrid Recurrent State Space Model (HRSSM), serving as a world model that leverages the strengths of the RSSM architecture to effectively capture task-specific information, guided by the bisimulation principle, and efficiently condense features through mask-based latent reconstruction. 
Our primary contributions are summarized as follows.
\begin{itemize}
    \item We introduce Hybrid RSSM that integrates masking-based latent reconstruction and the bisimulation principle into a model-based RL framework, enabling the learning of task-relevant representations capturing endogenous dynamics.
    \item We study the roles of masking-based latent reconstruction and the bisimulation principle in model-based RL with empirical and theoretical analysis.

    \item Empirically, we evaluate our Hybrid-RSSM and actor-critic architecture by integrating it into the DreamerV3 framework, and show that the resulting model can be used to solve complex tasks consisting of a variety of exogenous visual information. 
\end{itemize}


\section{Related Work}
\textbf{MBRL and World Model}
Model-based Reinforcement Learning (MBRL) stands as a prominent subfield in Reinforcement Learning, 
aiming to optimize total reward through action sequences derived from dynamics and reward models~\cite{sutton1990integrated, hamrick2019analogues}. Early approaches in MBRL typically focus on low-dimensional and compact state spaces~\cite{williams2017model, janner2019trust, janner2020gamma}, yet they demonstrated limited adaptability to more complex high-dimensional spaces. 
Recent efforts~\cite{hafner2019learning, hafner2019dream, hafner2020mastering, hafner2023mastering, hansen2022modem, rafailov2021offline, gelada2019deepmdp} have shifted towards learning world models for these intricate spaces, utilizing visual inputs and other signals like scalar rewards. These methods enable agents to simulate behaviors in a conceptual model, thereby reducing the reliance on  physical environment interactions. As a notable example, Dreamer~\cite{hafner2019dream, hafner2020mastering,hafner2023mastering} learns recurrent state-space models (RSSM) and the latent state space via reconstruction losses, though achieving a good performance in conventional environments yet fails in environments with much exogenous noise.

\textbf{Model-based Representation Learning}
Many recent MBRL methods start to integrate state representation learning into their framework to improve the robustness and efficiency of the model.
Some approaches formulations rely on strong assumptions~\cite{gelada2019deepmdp, agarwal2020flambe}. Some approches learn world model via requiring latent temporal consistency~\cite{zhao2023simplified, hansen2022temporal, hansen2023td}. Some approaches develop upon Dreamer architecture, combining the transformer-based masked auto-encoder~\cite{seo2023masked}, extending Dreamer by explicitly modeling two independent latent MDPs that represent useful signal and noise, respectively~\cite{fu2021learning, wang2022denoised}, optimizing the world model by utilizing mutual information~\cite{DBLP:journals/corr/abs-2309-00082}, regularizing world model via contrastive learning~\cite{okada2021dreaming, poudel2023recore} and prototype-based representation learning~\cite{deng2022dreamerpro}. Unlike other approaches that either neglect reward significance or are limited by modeling predefined noise form, our approach learns robust representations and dynamics effectively by incorporating reward-aware information and masking strategy, we provide a more detailed comparison and additional related works in Appendix~\ref{app:comparison}.

\section{Preliminaries}
\label{sec:prelim}
\textbf{MDP}
The standard Markov decision process (MDP) framework is given by a tuple $\mathcal{M}=(\mathcal{S},\mathcal{A}, P, r, \gamma)$, with state space $\mathcal{S}$, action space $\mathcal{A}$, reward function $r(s,a)$ bounded by $[R_\text{min},R_\text{max}]$, a discount factor $\gamma\in [0,1)$, and a transition function $P(\cdot,\cdot):\mathcal{S}\times\mathcal{A}\rightarrow\Delta\mathcal{S}$ that decides the next state, where the transition function can be either deterministic, i.e., $s'= P(s,a)$, or stochastic, i.e. $s'\sim P(\cdot|s,a)$. In the sequel, we use $P_{s}^{a}$ to denote $P(\cdot|s, a)$ or $P(s,a)$ for simplicity.
The agent in the state $s\in \mathcal{S}$ selects an action $a\in \mathcal{A}$ according to its policy, mapping states to a probability distribution on actions: $a\sim\pi(\cdot|s)$. 
 We make use of the state value function $V^{\pi}(s)=\mathbb{E}_{\mathcal{M}, \pi}\left[\sum_{t=0}^{\infty} \gamma^{t} r\left(s_{t}, a_{t}\right) \mid s_{0}=s\right]$  to describe the long term discounted reward of policy $\pi$ starting at the state $s$, where $\mathbb{E}_{\mathcal{M},\pi}$ denotes expectations under $s_0 \sim P_0$, $a_t\sim \pi(\cdot|s_t)$, and $s_{t+1}\sim P_s^a$. And the goal is to learn a policy $\pi$ that maximizes the sum of expected returns $\mathbb{E}_{\mathcal{M}, \pi}\left[\sum_{t=0}^{\infty} \gamma^{t} r\left(s_{t}, a_{t}\right) \mid s_{0}=s\right]$.

\textbf{Visual RL and Exogenous noise}
We address visual reinforcement learning (RL) where the agent perceives high-dimensional pixel images as observations, represented by $o_t\sim P(o_t|o_{<t},a_{<t})$. These observations are mapped into a lower-dimensional space via a transformation $\mathcal{T}$ and an encoder $\mathcal{E}$, \textit{i.e.}, $\mathcal{T} \circ\mathcal{E}:\mathcal{O}\rightarrow\mathcal{X}$, then generating a latent state in a latent space: $\zeta_t\in\mathcal{Z}$ through a world model. The agent's actions follow a policy distribution $\pi(a|\zeta)$ under this latent state space. We introduce a setting with exogenous noise, where observations come from a mix of controllable endogenous states $s_t\in \mathcal{S}$ and uncontrollable exogenous noise $\xi_t \in\Xi$. Here, $\zeta_t$ is composed of these two components, with transitions $P(\zeta_t|\zeta_{<t},a_{<t})=P(s_t|s_{<t},a_{<t})P(\xi_t|\xi_{<t})$, and rewards $r(\zeta_t,a_t)=r(s_t,a_t)$. We strive to compress latent state $\zeta_t$ by maximizing endogenous state $s_t$ and minimizing exogenous noise $\xi_t$, deriving an ``exogenous-free'' policy, essentially $\pi(a|\zeta)\approx \pi(a|s)$. Under a mild assumption of existing mapping function $\phi_\star$ from the observation $o\in\mathcal{O}$ to the endogenous state $s\in\mathcal{S}$, for any $o_1$ and $o_2$, if $\phi_\star(o_1)=\phi_\star(o_2)$, then $\pi(\cdot|o_1)=\pi(\cdot|o_2)$. The primary goal is to learn a world model that can discard exogenous noise and learn exo-free policy to improve the sample efficiency and robustness.

\section{Method}
In this section, we describe our overall approach of integrating the masking strategy and bisimulation principle in model-based RL methods, to learn effective world models for planning. We show that out method can be adapted to learn effective representatons in the presence of exogenous noise, and the resulting planner can be used to solve complex tasks, building on the DreamerV3~\cite{hafner2023mastering}. The whole pipeline is shown in Figure~\ref{fig:pipeline}.

\begin{figure}[t]
\centering
\includegraphics[width=0.95\linewidth]{./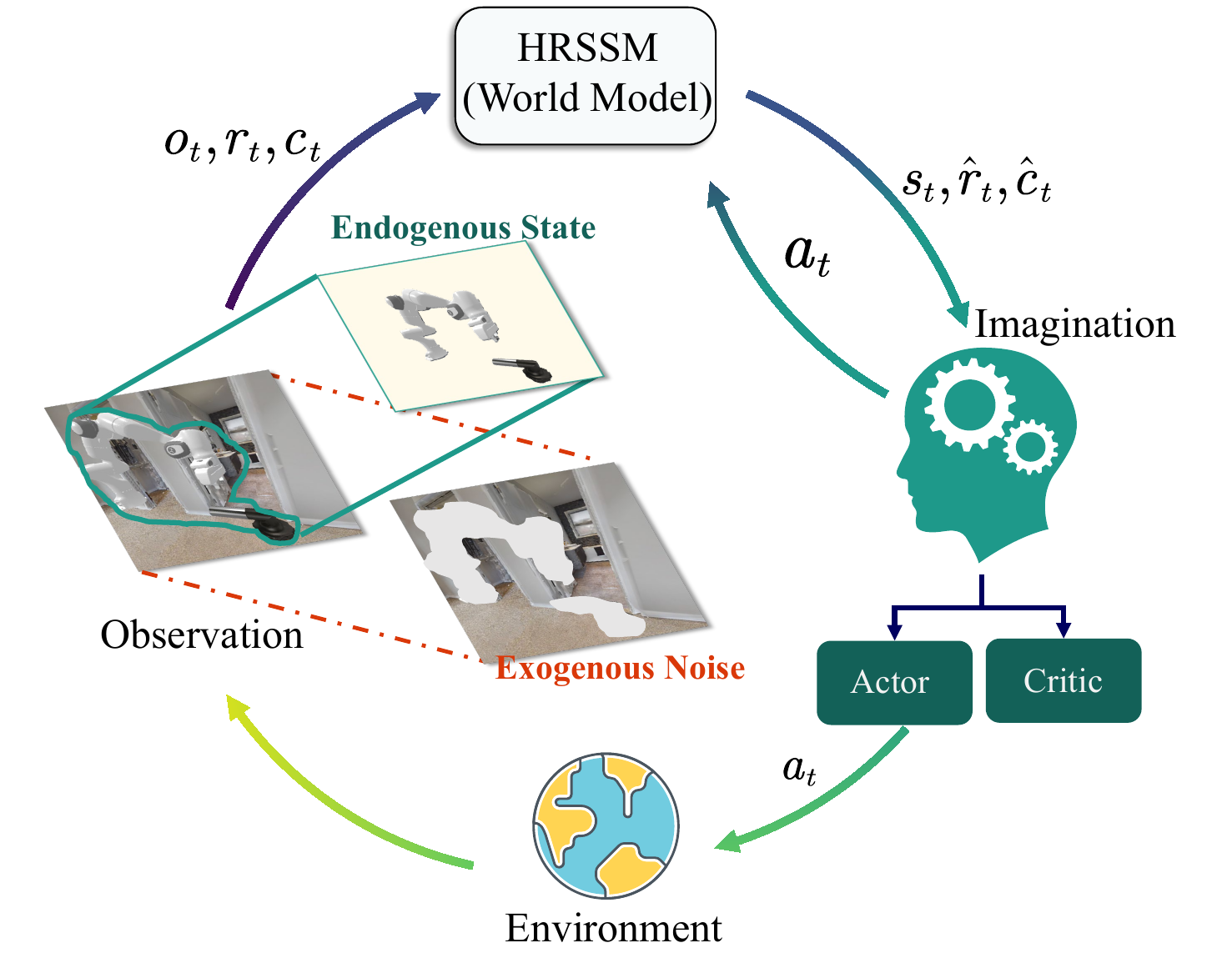}
    \caption{The entire pipeline of our framework in the presence of exogenous information. The HRSSM processes the observations into the latent space, enabling the agent to learn control within this space. Subsequently, the policy network generates actions for interacting with the environment. \vspace{-5mm} }
\label{fig:pipeline}
\end{figure}



\textbf{Modified Dreamer Architecture}
Dreamer utilizes a recurrent state space model (RSSM) \cite{hafner2019learning} for differentiable dynamics, learning representations of sensory inputs through backpropagation of Bellman errors from imagined trajectories. Its training process involves: optimizing the RSSM, training a policy using latent imaginations, and applying this policy in the real environment. This cycle repeats until the desired policy performance is achieved. The RSSM includes several crucial components: 
\begin{equation}
\label{eq:rssm}
\begin{alignedat}{4}
& \text{Recurrent model:} \padspace && h_t &\ = &\ f_\phi(h_{t-1},z_{t-1},a_{t-1}) \\
& \text{Representation model:} \padspace && z_t &\ \sim &\ \qp(z_t | h_t,o_t) \\
& \text{Transition predictor:} \padspace && \hat{z}_t &\ \sim &\ \pp(\hat{z}_t | h_t) \\
& \text{Reward predictor:} \padspace && \hat{r}_t &\ \sim &\ \pp(\hat{r}_t | h_t,z_t) \\
& \text{Continue predictor:} \padspace && \hat{c}_t &\ \sim &\ \pp(\hat{c}_t | h_t,z_t) \\
& \text{Decoder:} \padspace && \hat{o}_t &\ \sim &\ \pp(\hat{o}_t | h_t,z_t),
\end{alignedat}
\end{equation}
where $o_t$ is the sensory input, $z_t$ the stochastic representation, $h_t$ the recurrent state, $\hat{o}_t$ the reconstructed input, and $\hat{r}_t$ and $\hat{c}_t$ are the predicted reward and the episode continuation flag. 
While the decoder network is crucial in Dreamer for learning environment dynamics, its reliance on reconstructing high-dimensional sensory inputs like pixels causes computational inefficiency, which arises from recovering unnecessary, control-irrelevant visual elements such as background noise, impeding policy learning in environments with distractions. Prior works have explored how to recover the full endogenous latent states, by ignoring exogenous noise \cite{islam2022agent}; however, effectively recovering endogenous dynamics for model-based planning remains unaddressed. We aim to develop a method for recovering these dynamics for model-based planning. Simply omitting pixel reconstruction from Dreamer, as suggested by \cite{hafner2019dream}, results in inadequate performance. Therefore, we propose modifying Dreamer to preserve accurate dynamics and enhance its awareness of essential downstream task features, while reducing dependency on reconstruction.

\subsection{Learning latent representation and dynamics}
In visual control tasks, our state representation concentrates on two key aspects: (i) visual inputs includes much spatio-temporal redundancy, and (ii) the encapsulation of behaviorally relevant information for the task. We introduce two novel components: masking-based latent reconstruction and similarity-based representation. The former filters out redundant spatiotemporal data while preserving semantic useful environmental knowledge. The latter, aligning with the bisimulation principle, retains task-specific information within the world model. This approach results in latent representations that are concise and effective. 

Notably, our method may not recover the full endogenous dynamics, but can still be exo-free, distinguishing from other works \cite{lamb2022guaranteed}. Our key contribution is demonstrating adaptability to MBRL methods for planning, an area not fully addressed by prior research. We include detailed analysis of our proposed methodology in section ~\ref{sec:analysis}.
To keep the notation succinct, we will replace $\zeta$ with $s$ since our goal is to disregard $\xi$ and we will ensure to remind readers of this when necessary.

\paragraph{Masking strategy} Our goal is to design world models for planning that can be effective in the presence of visual exogenous information. To do this, we employ a masking strategy to reduce the spatio-temporal redundancy for enhanced control task representations. 
In visual RL tasks, previous works~\cite{tong2022videomae, wei2022masked} indicate that significant spatio-temporal redundancy can be removed via masking based reconstruction methods.  Consequently, we randomly mask a portion of pixels in the input observation sequence across its spatial and temporal dimensions. For a series of $K$ environmental interaction samples $\{o_t,a_t,r_t\}_{t=1}^K$, we transform the observation sequence $\mathbf{o}=\{o_t\}_{t=1}^K\in \mathbb{R}^{K\times H\times W\times C}$ into cuboid patches $\mathbf{\hat{o}}=\{\hat{o}_t\}_{t=1}^K\in\mathbb{R}^{kP_K\times hP_H\times wP_W \times C}$, where the patch size is $(P_K\times P_H \times P_W)$ and $k=K/P_K$, $h=H/P_H$, $w=W/P_W$ are the number of patches along each dimension. We then randomly mask a fraction $m$ of these cuboid patches to capture the most essential spatio-temporal information while discarding spatio-temporal redundancies. Subsequently, both the masked and original sequences are encoded to latent encoding space using an encoder and a momentum encoder respectively, where the momentum encoder is updated using an exponential moving average (EMA) from the masked sequence's encoder. 
\begin{table*}[htbp]
\centering
\caption{Model components of our hybrid structure. EMA means the corresponding model is updated via exponential moving average. Gradient back-propagates through mask models and reward/continue predictor.}
\begin{tabular}{ll} \hline\hline
Mask Encoder: $e_t^m = \mathcal{E}_\phi(o_t^m)$ & EMA Encoder: $e_t = \mathcal{E}'_\phi(o_t)$            \\
Mask Posterior model: $z^m_t \sim \qp(z^m_t | h^m_t,e^m_t)$             & EMA Posterior model: $z_t \sim q'_\phi(z_t | h^m_t,e_t)$             \\
Mask Recurrent model: $h^m_t = f_\phi(h^m_{t-1},z^m_{t-1},a_{t-1})$         & EMA Recurrent model: $h_t = f'_\phi(h^m_{t-1},z_{t-1},a_{t-1})$           \\
Mask Transition predictor: $\hat{z}^m_t \sim \pp(\hat{z}^m_t | h^m_t)$      & EMA Transition predictor: $\hat{z}_t \sim p'_\phi(\hat{z}_t | h_t)$ \\ \hline
Reward predictor: $\hat{r}_t \sim \pp(\hat{r}_t | h^m_t,z^m_t)$    &   Continue predictor: $\hat{c}_t \sim \pp(\hat{c}_t | h^m_t,z^m_t)$  \\ \hline\hline
\end{tabular}
\label{tab:hybrid_rssm}
\end{table*}

\vspace{-5mm}\paragraph{Behavioral update operator} To capture the task relevant information for control tasks, we adopt a similarity-based objective following the bisimulation principle~\cite{ferns2012metrics, ferns2012methods}, which requires the learnt representation to be aware of the reward and dynamics similarity between states. Our mask-based behavioral update operator, for masked and original sequences can be written as : 
\begin{equation}
\begin{aligned}
    \mathcal{F}^\pi d(s_i,s_j^m) =|r_{s_i}^\pi-r_{s_j}^\pi|+\gamma \mathbb{E}_{\begin{subarray} {l}s_{i+1}\sim \hat{P}_{s_{i}}^\pi, \\s_{j+1}^m\sim \hat{P}_{s_{j}^m}^\pi\end{subarray}}[d(s_{i+1},s_{j+1}^{m})],
\end{aligned}
\label{eq:bisim}
\end{equation}
where $s_j^m$ and $s_i$ represent latent states of the mask branch and the raw branch, respectively, with $\hat{P}_{s_j^m}^\pi$ and $\hat{P}_{s_i}^\pi$ denoting their approximated latent dynamics, and $d$ is the cosine distance to measure the difference between latent states. 

We use equation ~\ref{eq:bisim} to minimize bisimulation error for learning representation. However, this process involves sampling from latent dynamics, which, when coupled with the simultaneous learning of representations, dynamics, and policies in the world model, can lead to instabilities that adversely impact dynamics learning and consequently, bisimulation training. Therefore, we develop a hybrid RSSM specifically to address complex tasks, providing a level of stability in MBRL methods, which otherwise is typically difficult due to the complexities associated with training joint objectives.

\paragraph{Hybrid RSSM}
We first follow the conventional setting of RSSM in DreamerV3 to build in the masked encoding space, \textit{i.e.}, a mask encoder $e_t^m=\mathcal{E}_\phi(o_t^m)$ to encode the masked observation, a mask posterior model $z^m_t \sim \qp(z^m_t | h^m_t,e^m_t)$ and a mask recurrent model $h^m_t = f_\phi(h^m_{t-1},z^m_{t-1},a_{t-1})$ to incorporate temporal information into representations, and a mask transition predictor $\hat{z}^m_t \sim \pp(\hat{z}^m_t | h^m_t)$ to model the latent dynamics, where the concatenation of the mask recurrent state $h_t^m$ and the mask posterior state $z_t^m$ forms the mask latent state $s_t^m:=[h_t^m;z_t^m]$. We train the dynamics model by minimizing the KL divergence between the posterior state $z^m_t$ and the predicted prior state $\hat{z}^m_{t}$, and employ free bits~\cite{kingma2016improved, hafner2023mastering}, formulated as:
\begin{equation}
  \begin{aligned}
    \mathcal{L}_{\mathrm{dyn}}(\phi)&:=\beta_{\mathrm{1}}\mathrm{max}(1, \mathcal{L}_{\mathrm{1}}(\phi))+\beta_{\mathrm{2}}\mathrm{max}(1, \mathcal{L}_{\mathrm{2}}(\phi)) \\
    \mathcal{L_{\mathrm{1}}}(\phi)&:=\operatorname{KL}\!\big{[}\operatorname{sg}(q_{\phi}(z^m_{t}\;|\;h^m_{t},e^m_{t}))\;\big{\|}\;p_{\phi}(\hat{z}^m_{t}\;|\;h^m_{t})\hphantom{)}\big{]}\\
\mathcal{L_{\mathrm{2}}}(\phi)&:=\operatorname{KL}\!\big{[}q_{\phi}(z^m_{t}\;|\;h^m_{t},e^m_{t})\hphantom{)}\;\big{\|}\;\operatorname{sg}(p_{\phi}(\hat{z}^m_{t}\;|\;h^m_{t}))\big{]}
  \label{eq:KL loss}
  \end{aligned}
  \end{equation}
where $\operatorname{sg}$ means stopping gradient, and the values of $\beta_1$ and $\beta_2$ are set to 0.5 and 0.1, respectively, following the default configuration in DreamerV3.
For now, we only construct the network of the masked sequence, but without the utilization of the original sequence. If the raw branch utilizes a different RSSM structure from the mask one, merging these complex networks could lead to training instability and representation drift. To address this, we require the raw branch and the mask branch share the same historical representation, ensuring alignment between both branches for temporal prediction. Therefore, for the raw branch, we conduct the posterior state as $z_t\sim q'_\phi(z_t | h^m_t,e_t)$, the recurrent state $h_t=f'_\phi(h^m_{t-1},z_{t-1},a_{t-1})$ with the historical representation from the mask branch, and the prior state $\hat{z}_t\sim p'_\phi(\hat{z}_t|h_t)$. Additionally, we define the latent state of raw branch as $s_t:=[h_t^m;z_t]$ and the sampled latent state of RSSM as $\hat{s}_t=[h_t;\hat{z}_t]$. The networks $q'_\phi$, $f'_\phi$, and $p'_\phi$ are all updated using EMA from the mask branch.

We use latent reconstruction to align the feature between the masked and original ones, to disregard the unnecessary spatiotemporal redundancies, following the research within the field of computer vision~\cite{he2022masked, videomae} that considering high-dimensional image space consists tramendous spatiotemporal redundancies. We apply a linear projection and $\ell_2$-normalize the latent state $s_t$ and $s_t^m$ to obtain $\bar{s}_t$ and $\bar{s}^m_t$ respectively to ensure numerical stability and then compute the reconstruction loss, which can be formulated as:
\begin{equation}
  \begin{aligned}
    \mathcal{L_{\mathrm{rec}}}(\phi)&:= \mathrm{MSE}(\bar{s}_t, \bar{s}^m_t).
  \label{eq:rec_loss}
  \end{aligned}
\end{equation} 
Meanwhile, we can minimize the bisimulation error and formulate the similarity loss to capture the task-relevant information as:
\begin{equation}
\begin{aligned}
\mathcal{L}_\text{sim}&:=\left(d(s_i,s_j^m)-\mathcal{F}^\pi d(s_i,s_j^m)\right)^2\\&=\left(d(s_i,s_j^m)-\left(|r_{s_i}^\pi-r_{s_j}^\pi|+\gamma d(\hat{s}_{i+1},\hat{s}_{j+1}^{m})\right)\right)^2,
\label{eq:beh_loss}
\end{aligned}
\end{equation}
where $d$ is the cosine distance, $\hat{s}_{i+1}$ and $\hat{s}_{j+1}^m$ are sampled from RSSMs.

\paragraph{Reward Prediction and Continue Prediction}
Following DreamerV3~\cite{hafner2023mastering}, we train the reward predictor via the symlog loss and the continue predictor via binary classification loss, to predict the reward and the episode is termination or not, they compose the prediction loss as:
\begin{equation}
    \mathcal{L_{\mathrm{pred}}}(\phi):=-\ln p_{\phi}(r_{t}\;|\;s_t^m)-\ln p_{\phi}(c_{t}\;|\;s_t^m).
    \label{eq:pred_loss}
\end{equation}

Gradient backpropagation occurs exclusively through the mask branch, updating the representation. Consequently, we utilize only the masked latent state $s_t^m$ for predicting both terms. Unlike the methods in Equations~\ref{eq:rec_loss} and~\ref{eq:beh_loss}, we employ un-normalized features for prediction. Empirically, this approach enhances the model's stability and sample-efficiency, as detailed in Appendix~\ref{app:experiment}.

\paragraph{Overall} 
The main components of our hybrid structure are illustrated in Table~\ref{tab:hybrid_rssm}. The total loss is:
\begin{equation}
\begin{aligned}
    \mathcal{L}(\phi):=\operatorname{E}_{q_{\phi}}\Big{[}\textstyle\sum_{t=1}^{T}\big{(}&\mathcal{L}_{\mathrm{dyn}}(\phi)+\mathcal{L}_{\mathrm{rec}}(\phi)+\\&\mathcal{L}_{\mathrm{sim}}(\phi)+\mathcal{L}_{\mathrm{pred}}(\phi)\big{)}\Big{]},
\end{aligned} 
\end{equation}
All components are optimized concurrently, with the joint minimization of the loss function with respect to the parameter $\phi$, encompassing all model parameters, using the Adam optimizer~\cite{kingma2014adam}. Notably, the additional terms introduced do not require any extra user-specified hyperparameters, which is easy to optimize in practice.

\paragraph{Learning to control}
With the latent representation and dynamics model, we perform actor-critic policy learning by rolling out trajectories in the latent space. The critic $v_\psi(s_t)$ is trained to predict the discounted cumulative reward given a latent state, and the actor $\pi_\psi(s_t)$ is trained to take the action that maximizes the critic’s prediction, which follows actor-critic training in DreamerV3~\cite{hafner2023mastering}.

\begin{figure}[t]
\centerline{\includegraphics[width=0.7\linewidth]{./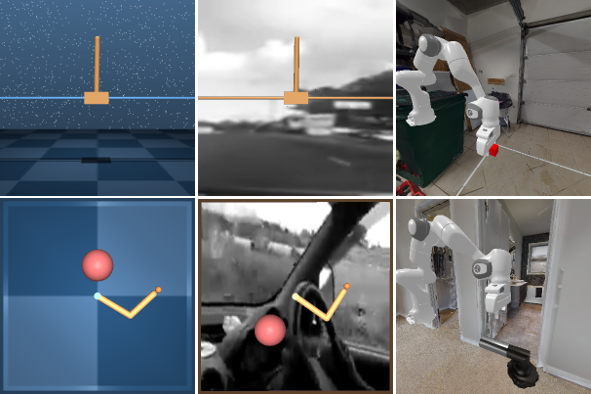}}
    \caption{Pixel observations of the DeepMind Control suite (left column) for \textit{cartpole} (top) and \textit{reacher} (bottom), Distracted DeepMind Control
suite (middle column) for \textit{cartpole} (top) and \textit{reacher} (bottom), and Mani-skill2 environments with distractions (right column) for \textit{cube} (top) and \textit{faucet} (bottom). \vspace{-5mm}   }
    \label{fig:envs}
\end{figure}

\section{Analysis}
\label{sec:analysis}
Our primary goal is to learn good state representations by focusing on two key objectives: latent reconstruction via a masking strategy for compact representations, and employing behavioral similarity for efficient representations. This section will highlight  both components are essential for our world model, underscoring their necessity.

Consider an MDP $\mathcal{M}$ as defined in Section~\ref{sec:prelim}, with vectorized state variables $\zeta=[s;\xi]$, where $\xi=[\xi^0 \xi^1 ... \xi^{n-1}]$ is a $n$-dim vector. We begin with an ideal assumption that our masking strategy only applies on exogenous noise $\xi$, \textit{i.e.}, $\tilde{\xi}\subseteq \xi$ be an arbitrary subset (a mask) and $\bar{\xi}=\xi\backslash\tilde{\xi}$ be the variables not included in the mask. Then the state reduces to $\bar{\zeta}=[s; \bar{\xi}]$. And we would like to know if the policy $\bar{\pi}$ under reduced MDP $\bar{\mathcal{M}}$ still being optimal for original MDP $\mathcal{M}$.
\begin{theorem}
If (1) $r(s_t,\xi^i_t,a_t)=0 \forall \xi^i \in \bar{\xi}$, (2) $P(s_{t+1}|s_{t},\xi,a_{t})=P(s_{t+1}|s_{t},\tilde{\xi},a_{t})$, and (3) $P(\tilde{\xi}_{t+1},\bar{\xi}_{t+1}|\tilde{\xi}_t,\bar{\xi}_t)=P(\tilde{\xi}_{t+1}|\tilde{\xi}_t)\cdot P(\bar{\xi}_{t+1}|\bar{\xi}_t)$, then we have $\bar{V}_{\bar{\pi}}(\bar{\zeta})=V_{\bar{\pi}}(\zeta) \forall \zeta \in \mathcal{Z}$, where $\bar{V}_{\bar{\pi}}(\bar{\zeta})$ is the value function under reduced MDP. If $\bar{\pi}$ is optimal for $\bar{\mathcal{M}}$, then $\bar{V}_{\bar{\pi}}(\bar{\zeta})=V^*(\zeta) \forall \zeta\in\mathcal{Z}$.
\end{theorem}
\begin{proof}
    See Appendix~\ref{app:proof}.
\end{proof}

It reveals that if we can identify and eliminate exogenous noise without altering the reward  or the internal dynamics of the underlying MDP, the resulting value function of this underlying MDP remains optimal with respect to the original problem. This scenario presents an opportunity for implementing a masking strategy. In practical settings, however, our masking approach involves random patch removal. This randomness does not guarantee the exclusive elimination of exogenous noise. Since elements of the environment crucial to the task may inadvertently be masked, the reward and dynamics can be incorrectly reconstructed, hence the underlying MDP (in latent space) is possibly changed. Consequently, if the masking technique is not sensitive to both the reward and the internal dynamics of the system, an optimal policy can not be assured. This limitation underscores why relying solely on masking-based latent reconstruction is insufficient for learning an effective world model in environments with distractions. Fortunately, the bisimulation principle offers a promising solution. By leveraging this principle, as detailed in Appendix~\ref{app:bisimulation}, we can train representations that encapsulate both reward and dynamic information. With bisimulation, the agent can be aware of the reward and the internal dynamics, and therefore can further update towards the optimal policies.


\begin{figure*}[t]
  \begin{center}
  \centerline{\includegraphics[width=0.7\textwidth]{./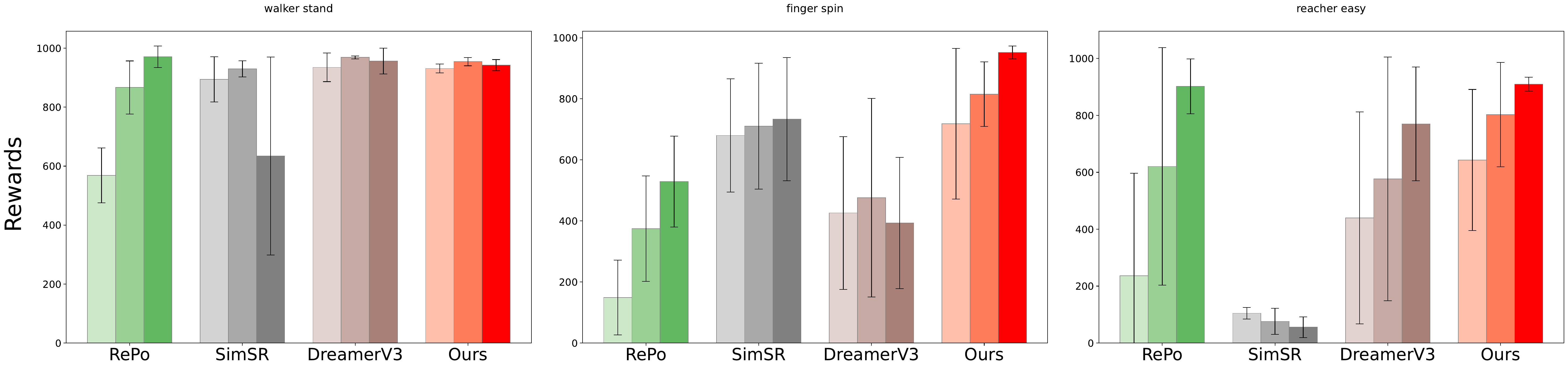}}
  \caption{Performance comparison on DMC tasks over 6 seeds in the default setting. Colors from light to dark represent the results evaluated at 100k, 250k, and 500k training steps, respectively, with different colors indicating different models.}
  \label{fig:defaut_results}
  \end{center}
  \end{figure*}

On the other hand, learning state reperesentation only with bisimulation objective is also not sufficient enough for model-based control. In model-based framework, integrating bisimulation-based objective requires to sample consecutive state pairs from an approximate dynamics model, \textit{e.g.}, RSSM in this paper. Though bisimulation objective has practically shown effectiveness in model-free settings~\cite{zhang2020invariant,zang2022simsr}, ~\cite{kemertas2021towards} illustrates that when refer to an approximate dynamics model, this dynamics model needs to meet certain condition to ensure the convergence of the bisimulation principle:
\begin{theorem}~\cite{kemertas2021towards}
    Assume $\mathcal{S}$ is compact. For $d^\pi$, if the support of an approximate dynamics model $\hat{P}$, $\operatorname{supp}(\hat{P})$ is a closed subset of $\mathcal{S}$, then there exists a unique fixed-point $d^\pi$, and this metric is bounded: $\operatorname{supp}(\hat{P}) \subseteq \mathcal{S} \Rightarrow \operatorname{diam}\left(\mathcal{S} ; {d}^\pi\right) \leq \frac{1}{1-\gamma}\left(R_{\max }-R_{\min }\right)$.
\end{theorem}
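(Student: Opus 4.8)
The plan is to invoke the Banach fixed-point theorem on the space of bounded pseudometrics over $\mathcal{S}$, for which the two structural hypotheses (compactness of $\mathcal{S}$ and closedness of $\operatorname{supp}(\hat{P})$) are precisely what guarantee a complete ambient space and a finite diameter. Concretely, I would let $\mathbb{M}$ denote the set of bounded pseudometrics $d:\mathcal{S}\times\mathcal{S}\to\mathbb{R}_{\geq 0}$, equipped with the uniform metric $\rho(d_1,d_2)=\sup_{s,s'}|d_1(s,s')-d_2(s,s')|$, and first show that $(\mathbb{M},\rho)$ is complete. Nonnegativity, symmetry, and the triangle inequality are all closed conditions, so they pass to uniform limits of Cauchy sequences; compactness of $\mathcal{S}$ keeps candidate limits bounded, and the closed-support hypothesis ensures $\mathcal{F}^\pi d$ is only ever evaluated on a closed subset, so the fixed point remains a genuine pseudometric on $\mathcal{S}$ rather than degenerating on the boundary.

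Next I would verify that $\mathcal{F}^\pi$ maps $\mathbb{M}$ into itself and is a $\gamma$-contraction. Well-definedness is routine: symmetry and nonnegativity of $\mathcal{F}^\pi d$ follow from both the reward-gap term and the expected-distance term being symmetric and nonnegative, while the triangle inequality is inherited by marginalizing the independent coupling in the expectation term (for fixed successor triples the inequality holds pointwise, and the three marginal expectations then chain). For the contraction, given $d_1,d_2\in\mathbb{M}$ the reward gap $|r_{s_i}^\pi-r_{s_j}^\pi|$ cancels, leaving
\[
\big|\mathcal{F}^\pi d_1(s_i,s_j)-\mathcal{F}^\pi d_2(s_i,s_j)\big| = \gamma\,\Big|\mathbb{E}[d_1(s_{i+1},s_{j+1})]-\mathbb{E}[d_2(s_{i+1},s_{j+1})]\Big| \leq \gamma\,\rho(d_1,d_2),
\]
where the last step moves the absolute value inside the expectation and bounds the integrand by $\rho(d_1,d_2)$. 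Taking the supremum over $(s_i,s_j)$ gives $\rho(\mathcal{F}^\pi d_1,\mathcal{F}^\pi d_2)\leq\gamma\,\rho(d_1,d_2)$; since $\gamma\in[0,1)$ this is a strict contraction, and Banach yields the unique fixed point $d^\pi=\mathcal{F}^\pi d^\pi$.

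Finally, for the diameter bound I would evaluate the fixed-point identity $d^\pi(s_i,s_j)=|r_{s_i}^\pi-r_{s_j}^\pi|+\gamma\,\mathbb{E}[d^\pi(s_{i+1},s_{j+1})]$ and take the supremum over $s_i,s_j$. The reward term is bounded by $R_{\max}-R_{\min}$ since $r\in[R_{\min},R_{\max}]$, while the expectation term is at most $\gamma\,\operatorname{diam}(\mathcal{S};d^\pi)$. This produces the self-referential inequality $\operatorname{diam}(\mathcal{S};d^\pi)\leq(R_{\max}-R_{\min})+\gamma\,\operatorname{diam}(\mathcal{S};d^\pi)$, which rearranges to $\operatorname{diam}(\mathcal{S};d^\pi)\leq\frac{1}{1-\gamma}(R_{\max}-R_{\min})$; the finiteness of the supremum that legitimizes the rearrangement is exactly what compactness of $\mathcal{S}$ secures.

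The step I expect to be the real obstacle is establishing completeness of $(\mathbb{M},\rho)$ together with the claim that $\mathcal{F}^\pi$ preserves the pseudometric structure under an \emph{approximate} dynamics model — this is where the closed-support hypothesis is indispensable. With an exact, reachable transition kernel the fixed point lives on a well-behaved set, but for an arbitrary $\hat{P}$ the support may fail to be closed, in which case a uniform limit of pseudometrics can degenerate on accumulation points not contained in the support and the triangle inequality in the expectation term need not survive the marginalization. Pinning down that $\operatorname{supp}(\hat{P})$ closed together with $\mathcal{S}$ compact rules out these pathologies — and is precisely the subtlety distinguishing this statement from the standard exact-model bisimulation contraction argument — is the part I would treat with the most care.
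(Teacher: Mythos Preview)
Your approach is correct and essentially matches the paper's: the paper only writes out the diameter bound via the same self-referential inequality $d^\pi(u,v)\leq (R_{\max}-R_{\min})+\gamma\,\operatorname{diam}(\mathcal{S};d^\pi)$ and then rearranges, deferring existence/uniqueness to Kemertas and Ferns, so your Banach argument is if anything more complete than what the paper records.

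One clarification on the part you flagged as the obstacle: completeness of $(\mathbb{M},\rho)$ for bounded pseudometrics holds over \emph{any} set and does not use compactness of $\mathcal{S}$, and the closed-support hypothesis $\operatorname{supp}(\hat{P})\subseteq\mathcal{S}$ enters only to guarantee that successor samples $s_{i+1},s_{j+1}$ land in $\mathcal{S}$, so that $\mathcal{F}^\pi d$ is well-defined and the step $\mathbb{E}[d^\pi(s_{i+1},s_{j+1})]\leq\operatorname{diam}(\mathcal{S};d^\pi)$ (the paper's ``Lemma 5 in Kemertas'') is valid --- not any issue with the triangle inequality surviving marginalization or boundary degeneration.
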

In practice, the support of an approximate dynamics model cannot be assured to be a subset of the observation space due to the presence of unpredictable exogenous noise. Consequently, when exogenous noise is involved, objectives dependent on transition dynamics, including bisimulation objectives, are likely incapable of filtering out all task-irrelevant information. A numerical counterexample illustrating this point is provided in Appendix~\ref{app:proof}. Therefore, to effectively reduce spatio-temporal redundancy in the observation space, additional methods are necessary. This is the rationale behind our adoption of a masking strategy and latent reconstruction in our approach.

\section{Experiments}
\label{sec:experiments}
\begin{figure*}[h]
  \begin{center}
  \includegraphics[width=0.55\linewidth]{./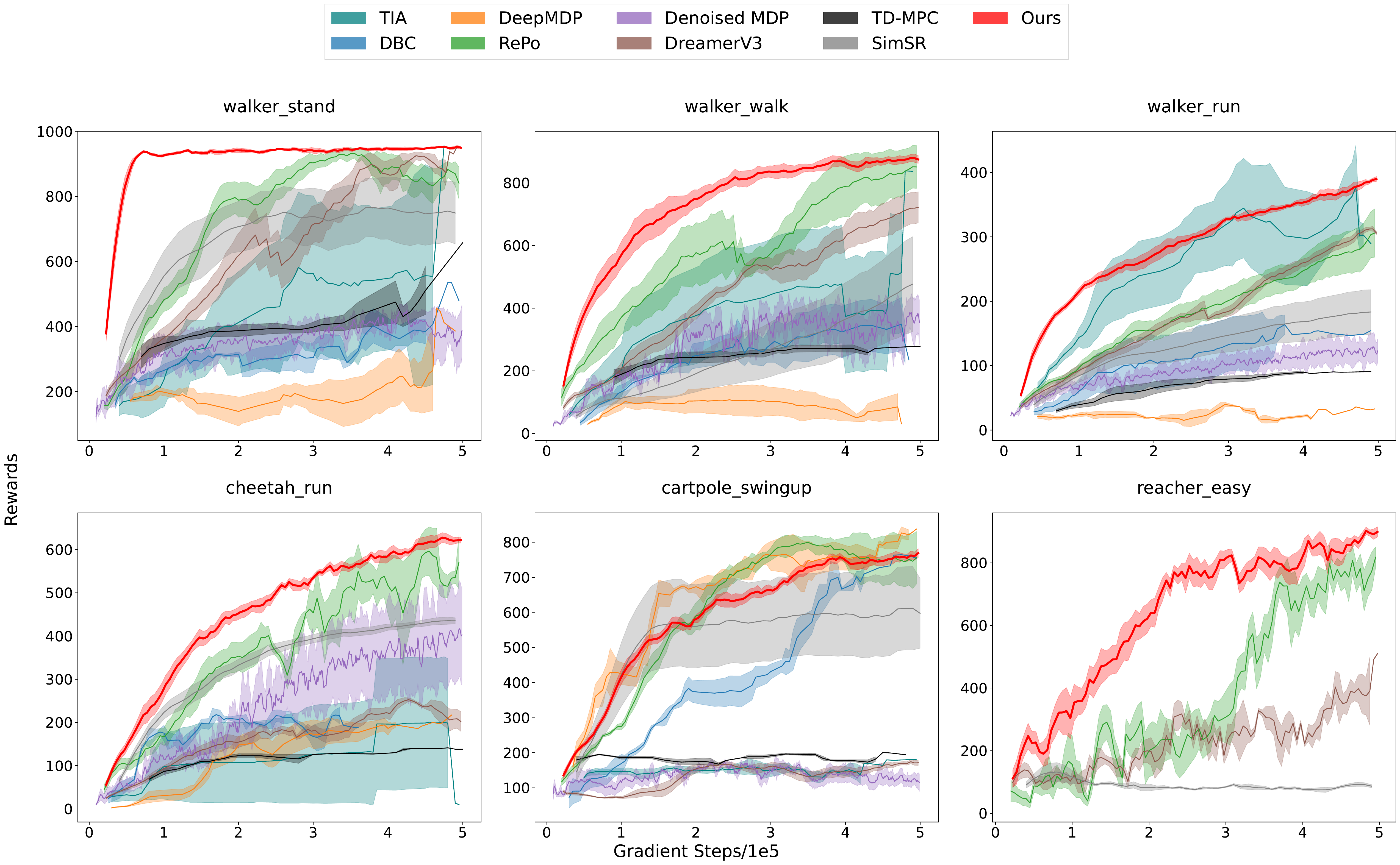}
  \caption{Performance comparison on DMC tasks with one standard error shaded in the distraction setting. The horizontal axis indicates the number of gradient steps. The vertical axis represents the mean return. For our model, RePo, along with DreamerV3 and SimSR, the returns are averaged with six random seeds. For the remaining models, the returns are averaged with three seeds.
  \vspace{-4mm}  
  }
  \label{fig:dmc_distraction}
  \end{center}
  \end{figure*}
We aim to address the following questions through our experiments : (1) Compared to prior approaches, does our decoder-free model weaken the resulting performance of the policy on downstream tasks? (2) Can we learn effective world models for planning, in the presence of environments containing exogenous  spatio-temporal noise structures? (3) We perform ablation studies showing the effectiveness of each of the components in our proposed model (4) Can the proposed Hybrid-RSSM architecture, along with the masking strategy, outperform state-of-the-art Dreamer based models, in presence of exogenous information in data? 

\begin{table*}[t]
  \centering
  \caption{Summary of performance metrics evaluated at 100K training steps. The performance is quantified in terms of the average score $\pm$ standard deviation. The highest result for each task is highlighted in bold. For RePo, DreamerV3, and our model, the returns are averaged with six random seeds. For TIA and Denoised MDP, the returns are averaged with three seeds.}
  \begin{tabular}{||l||lllll||l}
  \hline
  Task        & {DreamerV3}  & TIA     & Denoised MDP & RePo    & Ours   \\ \hline
  Lift Cube    & 167$\pm$45 & \textbf{274$\pm$173} & 155$\pm$71       & 83$\pm$22 & \textbf{284$\pm$85} \\
  Turn Faucet & 138$\pm$83   & 47$\pm$23   & 71$\pm$27        & 92$\pm$88   & \textbf{278$\pm$97}\\
  \hline
  \end{tabular}
  \vspace{-3mm}  
  \label{tab:maniskill results}
\end{table*}

\textbf{Experimental Setup} We evaluate our visual image-based continuous control tasks to assess their sample efficiency and overall performance. We perform our experiments in three distinct settings: 
i) a set of MuJoCo tasks~\cite{todorov2012mujoco} provided by  Deepmind Control(DMC) suite~\cite{tassa2018deepmind}, ii) a variant of DeepMind Control Suite where the background is replaced with grayscale natural videos from Kinetics dataset~\cite{kay2017kinetics}, termed as Distracted DeepMind Control Suite~\cite{zhang2018natural}, and iii) a benchmark based on the Maniskill2~\cite{gu2023maniskill2}, enhanced with realistic images of human homes~\cite{chang2017matterport3d} as backgrounds and was introduced in~\cite{DBLP:journals/corr/abs-2309-00082}.  Six tasks were tested in the first two settings and two in the last, with a total of 14 tasks. Task examples are depicted in Figure~\ref{fig:envs}.


\textbf{Baselines} We compare our proposed model against leading sample-efficient, model-free and model-based reinforcement learning (RL) methods in continuous control tasks. For model-free mehtods, our baselines include: 
DBC~\cite{zhang2020invariant} and SimSR~\cite{zang2022simsr}, both of which are two representative bisimulation-based methods. For model-based RL, experimental comparisons are made with TD-MPC~\cite{hansen2022temporal}, DreamerV3~\cite{hafner2023mastering}, and its extensions (TIA~\cite{fu2021learning}, Denoised MDP~\cite{wang2022denoised}, RePo~\cite{DBLP:journals/corr/abs-2309-00082}) that enhance robust representation learning. In this experiment, we use DreamerV3 as our backbone and build on top of it to develop our hybrid structure, where we use an unofficial open-sourced pytorch version of DreamerV3\cite{dreamerv3torch}. Notably, despite incorporating dual RSSMs, mask branch and raw branch in our framework, our model maintains a slightly smaller overall size compared to the original DreamerV3, which is notable considering the substantial size of the decoder parameters in DreamerV3. Detailed descriptions of our model are provided in the Appendix~\ref{app:model_detail}.


\textbf{Results on DMC tasks with default settings.} As shown in the left
column of Figure~\ref{fig:envs}, the default setting, which is provided by DMC, has simple backgrounds for the pixel observations.
Figure~\ref{fig:defaut_results} shows that our model consistently surpasses all baselines including RePo at 100k, 250k and 500k training steps in all three tasks, showcasing superior sample efficiency and final performance. Our model consistently equals or betters the performance of DreamerV3, illustrating our robustness against performance loss from omitting pixel-level reconstruction. This also highlights that the performance improvements of our model are primarily attributed to the innovative hybrid RSSM structure and objectives.

\textbf{Results on DMC tasks with distraction settings.} 
Figure~\ref{fig:dmc_distraction} illustrates our model's ability to ignore irrelevant information, outperforming most other models in various tasks. This underscores our method's resilience and efficiency in learning exo-free policies even in the presence of significant distractor information.Notably, we have almost the lowest variance across all tasks, which illustrates the robustness of our hybrid architecture, showing that our HRSSM is well-suited for model-based agents and is capable of learning compact and effective representations and dynamics.  However, in the \textit{cartpole\_swingup} task, our model slightly underperforms compared to DeepMDP and RePo. This may be due to our random masking strategy, which might inadvertently hide crucial elements like the small pole, crucial for task-relevant information. A learned masking strategy could be more effective than random masking in such cases, which is deserved to further investigation.

\textbf{Realistic Maniskill} 
Table~\ref{tab:maniskill results} not only demonstrates the competitive performance of our method but also underlines its distinct advantages in terms of consistency and robustness across different tasks. In the \textit{Lift Cube} task, our method achieved a competitive score of 284$\pm$85, paralleling TIA. However, the significantly lower variance in our results indicates superior consistency and reliability. This is critical in real-world scenarios where predictability and stability are as crucial as performance. In \textit{Turn Faucet}, our method's superiority is even more pronounced, substantially higher than its closest competitor. This not only showcases our method's ability to handle complex tasks efficiently but also its robust state representation.




\textbf{Ablation Studies} Our model comprises two key elements: mask-based latent reconstruction and a similarity objective guided by the bisimulation principle. We present their empirical impacts in the distraction setting of DMC tasks in Appendix~\ref{app:ablation}. To evaluate mask-based latent reconstruction, we eliminated the mask branch and reverted our hybrid RSSM to a standard RSSM, also omitting the cube masking and the latent reconstruction loss. For the bisimulation principle ablation, we simply removed the similarity loss. Results indicate that models lacking these components underperform relative to the full model, showcasing their critical importance in our framework.

\textbf{Training Time Comparison} 
As our hybrid structure incorporates two RSSMs, one might wonder about the computational efficiency of our framework. Notably, gradients are only backpropagated through the mask branch, while the parameters of the RSSM in the raw branch are updated via Exponential Moving Average (EMA). Moreover, since we utilize the same historical representation, the computational time required for the forward process is considerably less than twice as much. To validate this, we compared the wall-clock training time of our method against DreamerV3, with the results provided in Appendix~\ref{app:wall_clock}. These results confirm that our method is comparable to the original DreamerV3 in terms of computational efficiency, without incurring substantial additional time costs.



\section{Discussion}

\textbf{Limitations and Future Work} Our approach's potential limitation lies in the lack of a task-specific masking strategy, which could partially damage the endogenous state and slightly reduce the final performance. Future improvements could involve signal-to-noise ratios~\cite{tomar2023ignorance} to reduce the original image, aiming to identify the minimal information essential for the task.

\textbf{Conclusion: } In this paper, we presented a new framework to learn state representations and dynamics in the presence of exogenous noise. We introduced the masking strategy and latent reconstruction to eliminate redundant spatio-temporal information, and employed bisimulation principle to capture task-relevant information. Addressing co-training instabilities, we further developed a hybrid RSSM structure. Empirical results demonstrated the effectiveness of our model. 

\newpage

\section*{Impact Statement}
This paper synthesizes theoretical and empirical results to build more capable Model-Based Reinforcement Learning (MBRL) agents in settings with exogenous noise. In real-world applications, distractions are prevalent across different scenarios. Enabling model-based agents to learn control from such scenarios can be beneficial not only for solving complex tasks but also for increasing sample efficiency during deployments in environments with varying contexts. Our framework is not only theoretically sound but also technically straightforward to implement and empirically competitive. We believe that developing MBRL agents by focusing on the compactness and effectiveness of the representation and dynamics is an important step towards creating more applicable Artificial General Intelligence (AGI).

\section*{Acknowledgments}
The authors would like to thank Chuning Zhu for generously sharing the performance data of several baselines. The authors would also like to thank the anonymous reviewers for their constructive feedback, which greatly helped us to improve the quality and clarity of the paper. RI would like to thank the DreamFold team for providing workspace to carry out this work. This work was partially supported by the NSFC under Grants 92270125 and 62276024, as well as the National Key R\&D Program of China under Grant No.2022YFC3302101.

\bibliography{example_paper}
\bibliographystyle{icml2024}

\newpage
\appendix
\onecolumn
\section{Hyperparameters}
We present all hyperparameters in Table ~\ref{tab:hparams}. 

\begin{table}[h!]
  \caption{Our model's hyperparameters, which are the same across all tasks in DMControl and Realistic Maniskill.
  }
  \centering
  \begin{tabular}{lll}
  \toprule
  \textbf{Name} & \textbf{Symbol} & \textbf{Value} \\
  \midrule
  \multicolumn{3}{l}{\textbf{General}} \\
  \midrule
  Replay capacity (FIFO) & --- & $10^6\!\!$ \\
  Batch size & $B$ & 16 \\
  Batch length & $T$ & 64 \\
  Activation & --- & $\operatorname{LayerNorm}+\operatorname{SiLU}$ \\
  \midrule
  \multicolumn{3}{l}{\textbf{World Model}} \\
  \midrule
  Number of latents & --- & 32 \\
  Classes per latent & --- & 32 \\
  Learning rate & --- & $10^{-4}$ \\
  Adam epsilon & $\epsilon$ & $10^{-8}$ \\
  Gradient clipping & --- & 1000 \\
  \midrule
  \multicolumn{3}{l}{\textbf{Actor Critic}} \\
  \midrule
  Imagination horizon & $H$ & 15 \\
  Discount horizon & $1/(1-\gamma)$ & 333 \\
  Return lambda & $\lambda$ & 0.95 \\
  Critic EMA decay & --- & 0.98 \\
  Critic EMA regularizer & --- & 1 \\
  Return normalization scale & $S$ & $\operatorname{Per}(R, 95) - \operatorname{Per}(R, 5)$ \\
  Return normalization limit & $L$ & 1 \\
  Return normalization decay & --- & 0.99 \\
  Actor entropy scale & $\eta$ & $3\cdot10^{-4}$ \\
  Learning rate & --- & $3\cdot10^{-5}$ \\
  Adam epsilon & $\epsilon$ & $10^{-5}$ \\
  Gradient clipping & --- & 100 \\
  \midrule
  \multicolumn{3}{l}{\textbf{Masking}} \\
  \midrule
  Mask ratio & --- & 50\% \\
  Cube spatial size & $h \times w$ & $10 \times 10$ \\
  Cube depth & $k$ & 4 \\
  \bottomrule
  
  \end{tabular}

  \label{tab:hparams}
  \end{table}
  
\section{Analysis and Example}\label{app:proof}

\subsection{Masking Strategy}
Consider an MDP $\mathcal{M}$ as defined in Section~\ref{sec:prelim}, with vectorized state variables $\zeta=[s;\xi]$, where $\xi=[\xi^0 \xi^1 ... \xi^n]$ is a $n$-dim vector. We begin with an ideal assumption that our masking strategy only applies on exogenous noise $\xi$, \textit{i.e.}, $\tilde{\xi}\subseteq \xi$ be an arbitrary subset (a mask) and $\bar{\xi}=\xi\backslash\tilde{\xi}$ be the variables not included in the mask. Then the state reduces to $\bar{\zeta}=[s; \bar{\xi}]$. And we would like to know if the policy $\bar{\pi}$ under reduced MDP $\bar{\mathcal{M}}$ still being optimal for original MDP $\mathcal{M}$.
\begin{theorem}
If (1) $r(s_t,\xi^i_t,a_t)=0 \forall \xi^i \in \bar{\xi}$, (2) $P(s_{t+1}|s_{t},\xi,a_{t})=P(s_{t+1}|s_{t},\tilde{\xi},a_{t})$, and (3) $P(\tilde{\xi}_{t+1},\bar{\xi}_{t+1}|\tilde{\xi}_t,\bar{\xi}_t)=P(\tilde{\xi}_{t+1}|\tilde{\xi}_t)\cdot P(\bar{\xi}_{t+1}|\bar{\xi}_t)$, then we have $\bar{V}_{\bar{\pi}}(\bar{\zeta})=V_{\bar{\pi}}(\zeta) \forall \zeta \in \mathcal{Z}$, where $\bar{V}_{\bar{\pi}}(\bar{\zeta})$ is the value function under reduced MDP. If $\bar{\pi}$ is optimal for $\bar{\mathcal{M}}$, then $\bar{V}_{\bar{\pi}}(\bar{\zeta})=V^*(\zeta) \forall \zeta\in\mathcal{Z}$.
\end{theorem}
\begin{proof}
    This proof mimics the proof of Theorem 1 in ~\cite{chitnis2020learning}. Consider an arbitrary state $\zeta\in\mathcal{Z}$, and its reduced state $\bar{\zeta}$, we have the following equations:
    \begin{equation}
        V_{\bar{\pi}}(\zeta)=R(\zeta,\bar{\pi}(\bar{\zeta})) + \gamma\sum_{\zeta'}P(\zeta'|\zeta,\bar{\pi}(\bar{\zeta}))\cdot V_{\bar{\pi}}(\zeta').
    \end{equation}
    \begin{equation}
        \bar{V}_{\bar{\pi}}(\bar{\zeta})=R(\bar{\zeta},\bar{\pi}(\bar{\zeta})) + \gamma\sum_{\bar{\zeta}'}P(\bar{\zeta}'|\bar{\zeta},\bar{\pi}(\bar{\zeta}))\cdot V_{\bar{\pi}}(\bar{\zeta}').
    \end{equation}
    Now suppose $V_{\bar{\pi}}^k(\zeta)=\bar{V}_{\bar{\pi}}^k(\bar{\zeta}) \forall \zeta\in\mathcal{Z}$, for some $k$. 
    \begin{align*}
  V^{k+1}_{\bar{\pi}}(\zeta) &= R(\zeta, \bar{\pi}(\bar{\zeta})) + \gamma \sum_{\zeta'} P(\zeta' \mid \zeta, \bar{\pi}(\bar{\zeta})) \cdot V^k_{\bar{\pi}}(\zeta')
  \\&= R(s,\bar{\pi}(\bar{\zeta})) +\sum_{i=1}^n R^i(s, \xi^i, \bar{\pi}(\bar{\zeta})) + \gamma \sum_{\zeta'} P(s' \mid s, \bar{\pi}(\bar{\zeta}), \xi) \cdot P(\xi' \mid \xi) \cdot V^k_{\bar{\pi}}(\zeta') 
  \\&= R(\bar{\zeta}, \bar{\pi}(\bar{\zeta})) + \gamma \sum_{\zeta'} P(s' \mid s, \bar{\pi}(\bar{\zeta}), \xi) \cdot P(\xi' \mid \xi) \cdot V^k_{\bar{\pi}}(\zeta')
  \\&= R(\bar{\zeta}, \bar{\pi}(\bar{\zeta})) + \gamma \sum_{\zeta'} P(s' \mid s, \bar{\pi}(\bar{\zeta}), \bar{\xi}) \cdot P(\xi' \mid \xi) \cdot V^k_{\bar{\pi}}(\zeta')
  \\&= R(\bar{\zeta}, \bar{\pi}(\bar{\zeta})) + \gamma \sum_{s',\bar{\xi}'}\sum_{\tilde{\xi}'} P(s' \mid s, \bar{\pi}(\bar{\zeta}), \bar{\xi}) \cdot P(\tilde{\xi}',\bar{\xi}' \mid \tilde{\xi},\bar{\xi}) \cdot V^k_{\bar{\pi}}(\zeta')
  \\&= R(\bar{\zeta}, \bar{\pi}(\bar{\zeta})) + \gamma \sum_{s',\bar{\xi}'}\sum_{\tilde{\xi}'} P(s' \mid s, \bar{\pi}(\bar{\zeta}), \bar{\xi}) \cdot P(\tilde{\xi}'\mid \tilde{\xi}) P(\bar{\xi}' \mid \bar{\xi}) \cdot V^k_{\bar{\pi}}(\zeta')
    \\&= R(\bar{\zeta}, \bar{\pi}(\bar{\zeta})) + \gamma \sum_{s',\bar{\xi}'}\sum_{\tilde{\xi}'} P(s' \mid s, \bar{\pi}(\bar{\zeta}), \bar{\xi}) \cdot P(\tilde{\xi}'\mid \tilde{\xi}) P(\bar{\xi}' \mid \bar{\xi}) \cdot V^k_{\bar{\pi}}(s',\bar{\xi}')
    \\&= R(\bar{\zeta}, \bar{\pi}(\bar{\zeta})) + \gamma \sum_{\bar{\zeta}}P(\bar{\zeta}' \mid \bar{\zeta}, \bar{\pi}(\bar{\zeta})) \cdot V^k_{\bar{\pi}}(\bar{\zeta}').
  \\&= \bar{V}^{k+1}_{\bar{\pi}}(\bar{\zeta}).
\end{align*}
 Therefore, we have that $\bar{V}_{\bar{\pi}}(\bar{\zeta})=V_{\bar{\pi}}(\zeta) \forall \zeta \in \mathcal{Z}$. And if $\bar{\pi}$ is optimal for $\bar{\mathcal{M}}$, then it is optimal for the full MDP $\mathcal{M}$ as well.
\end{proof}

\subsection{Bisimulation Principle}
\label{app:bisimulation}
Bisimulation measures equivalence relations on MDPs in a recursive manner: two states are considered equivalent if they share equivalent distributions over the next equivalent states and have the same immediate reward \cite{DBLP:conf/popl/LarsenS89, DBLP:journals/ai/GivanDG03}. 
\begin{definition}
Given an MDP $\mathcal{M}$, an equivalence relation $E \subseteq \mathcal{S} \times \mathcal{S}$ is a bisimulation relation if whenever $(s,u)\in E$ the following properties hold, where $\mathcal{S}_E$ is the state space $\mathcal{S}$ partitioned into equivalence classes defined by $E$:
\begin{enumerate}
\item $\forall a \in \mathcal{A}, \mathcal{R}(s, a)=\mathcal{R}(u, a)$
\item $\forall a \in \mathcal{A}, \forall c \in \mathcal{S}_E, \mathcal{P}(s, a)(c)=\mathcal{P}(u, a)(c)$ where $\mathcal{P}(s, a)(c)=\sum_{s^{\prime} \in c} \mathcal{P}(s, a)\left(s^{\prime}\right)$.
\end{enumerate}
Two states $s,u\in\mathcal{S}$ are bisimilar if there exists a bisimulation relation $E$ such that $(s,u)\in E$. We denote the largest bisimulation relation as $\sim$.
\end{definition}
However, bisimulation, by considering equivalence for all actions including bad ones, often leads to "pessimistic" outcomes. To address this, \cite{castro2020scalable} introduced $\pi$-bisimulation, which eliminates the need to consider every action and instead focuses on actions induced by a policy $\pi$.
\begin{definition}\cite{castro2020scalable}
Given an MDP $\mathcal{M}$, an equivalence relation $E^{\pi} \subseteq \mathcal{S} \times \mathcal{S}$ is a $\pi$-bisimulation relation if the following properties hold whenever $(s,u)\in E^{\pi}$:
\begin{enumerate}
\item $r(s,{\pi})=r(u,{\pi})$
\item $\forall C \in \mathcal{S}_{E^{\pi}}, T(C|s,{\pi})=T(C|u,{\pi})$
\end{enumerate}
where $\mathcal{S}_{E^{\pi}}$ is the state space $\mathcal{S}$ partitioned into equivalence classes defined by $E^{\pi}$. Two states $s,u \in S$ are $\pi$-bisimilar if there exists a $\pi$-bisimulation relation $E^{\pi}$ such that $(s,u) \in E^{\pi}$. 
\end{definition}
However, $\pi$-bisimulation is still too strict to be practically applied at scale, as it treats equivalence as a binary property: either two states are equivalent or not, making it highly sensitive to perturbations in numerical values of model parameters. This issue becomes even more pronounced when deep frameworks are employed.
To address this, \cite{castro2020scalable} further proposed a $\pi$-bisimulation metric that incorporates the absolute difference between immediate rewards of two states and the $1$-Wasserstein distance ($\mathcal{W}_1$) between the transition distributions conditioned on the two states and the policy $\pi$:

\begin{theorem}[~\cite{castro2020scalable}]
    Define $\mathcal{F}^{\pi}:\mathcal{M}\rightarrow\mathcal{M} $ by $\mathcal{F}^{\pi}(d)(u,v)=|R(u,\pi)-R(v,\pi)|+\gamma\mathcal{W}_1(d)(P_u^\pi,P_v^\pi)$, then $\mathcal{F}^\pi$ has a least fixed point $d^\pi_\sim$, and $d^\pi_\sim$ is a $\pi$-bisimulation metric.
\end{theorem}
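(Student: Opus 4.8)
The plan is to realize $\mathcal{F}^\pi$ as a contraction on a complete metric space of pseudometrics and invoke the Banach fixed-point theorem, then separately verify that the resulting fixed point vanishes exactly on $\pi$-bisimilar pairs. First I would fix the underlying space: let $\mathbb{M}$ denote the set of bounded pseudometrics on $\mathcal{S}$, equipped with the uniform norm $\|d_1 - d_2\|_\infty = \sup_{s,u}|d_1(s,u)-d_2(s,u)|$. Since the pseudometric axioms (nonnegativity, symmetry, $d(s,s)=0$, triangle inequality) are all preserved under uniform limits, $(\mathbb{M}, \|\cdot\|_\infty)$ is complete. I would then check that $\mathcal{F}^\pi$ maps $\mathbb{M}$ into $\mathbb{M}$: the reward term $|R(u,\pi)-R(v,\pi)|$ is a bounded pseudometric, and $\mathcal{W}_1(d)(P_u^\pi, P_v^\pi)$ inherits the pseudometric axioms from $d$ via the coupling characterization of the Wasserstein distance, with boundedness coming from boundedness of $r$ and of $d$.

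The core step is the contraction estimate. For $d_1, d_2 \in \mathbb{M}$ the reward terms cancel, leaving $|\mathcal{F}^\pi(d_1)(u,v) - \mathcal{F}^\pi(d_2)(u,v)| = \gamma|\mathcal{W}_1(d_1)(P_u^\pi,P_v^\pi) - \mathcal{W}_1(d_2)(P_u^\pi,P_v^\pi)|$. I would bound this using the $1$-Lipschitz dependence of the Wasserstein distance on its ground metric: taking an optimal coupling $\lambda^\star$ for $d_2$ gives $\mathcal{W}_1(d_1)(\mu,\nu) \le \int d_1\, d\lambda^\star \le \int d_2\, d\lambda^\star + \|d_1-d_2\|_\infty = \mathcal{W}_1(d_2)(\mu,\nu) + \|d_1-d_2\|_\infty$, and symmetrizing yields $|\mathcal{W}_1(d_1)-\mathcal{W}_1(d_2)| \le \|d_1-d_2\|_\infty$. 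Hence $\|\mathcal{F}^\pi(d_1)-\mathcal{F}^\pi(d_2)\|_\infty \le \gamma\|d_1-d_2\|_\infty$, so $\mathcal{F}^\pi$ is a $\gamma$-contraction. Banach's theorem then produces a unique fixed point $d^\pi_\sim$; uniqueness makes it vacuously the least fixed point as well.

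It remains to show $d^\pi_\sim$ is a $\pi$-bisimulation metric, i.e. $d^\pi_\sim(s,u)=0 \iff (s,u)\in E^\pi$. Writing $\sim^\pi$ for the largest $\pi$-bisimulation relation, for the forward direction I would argue that $Z = \{(s,u): d^\pi_\sim(s,u)=0\}$ is itself a $\pi$-bisimulation relation: the fixed-point identity $0 = |R(s,\pi)-R(u,\pi)| + \gamma\mathcal{W}_1(d^\pi_\sim)(P_s^\pi,P_u^\pi)$ forces both nonnegative summands to vanish, giving equal rewards and $\mathcal{W}_1(d^\pi_\sim)(P_s^\pi,P_u^\pi)=0$, which in turn forces $P_s^\pi$ and $P_u^\pi$ to agree on every $Z$-equivalence class; since $\sim^\pi$ is the largest such relation, $Z \subseteq {\sim^\pi}$. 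For the converse I would run the value-iteration-style sequence $d_n = (\mathcal{F}^\pi)^n(0)$, which converges to $d^\pi_\sim$, and prove by induction that $d_n$ vanishes on $\sim^\pi$: the reward condition kills the first term, while the transition condition lets me couple $P_s^\pi$ and $P_u^\pi$ through $\sim^\pi$-equivalent pairs on which $d_n$ is zero by hypothesis, so the Wasserstein term vanishes too; passing to the limit gives $d^\pi_\sim = 0$ on $\sim^\pi$.

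I would flag this last step as the main obstacle. The algebraic cancellation and the contraction estimate are routine once the Wasserstein-Lipschitz lemma is in hand, but carefully relating $\mathcal{W}_1(d)=0$ to ``the distributions agree on the $\{d=0\}$-classes'' — and the coupling construction in the inductive converse — is where the real work lies, since it exploits the equivalence-class and measurability structure induced by the ground pseudometric rather than just its metric inequalities.
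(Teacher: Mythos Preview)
Your proposal is correct and aligns with the paper's own treatment: the paper does not spell out a full proof but simply remarks that ``it suffices to show that the above fixed-point updates are contraction mappings'' and then invoke the Banach fixed-point theorem, which is exactly your core argument. You go further than the paper by also verifying the $\pi$-bisimulation-metric property of the fixed point, which the paper leaves to the cited reference.
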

It suffices to show that above fixed-point updates are contraction mappings. Then the existence of a unique metric can be proved by invoke the Banach fixed-point theorem~\cite{ferns2011bisimulation}. An essential assumption is that the state space $\mathcal{S}$ should be compact\footnote{A continuous space is compact if and only if it is totally bounded and complete.}. And the compactness of $\mathcal{S}$ implies that the metric space over this state space is complete such that the Banach fixed-point theorem can be applied. And when considering the approximate dynamics, the situation becomes more complicated. ~\cite{kemertas2021towards} show that:
\begin{theorem}[~\cite{kemertas2021towards}]
    Assume $\mathcal{S}$ is compact. For $d^\pi$, if the support of an approximate dynamics model $\hat{P}$, $\operatorname{supp}(\hat{P})$ is a closed subset of $\mathcal{S}$, then there exists a unique fixed-point $d^\pi$, and this metric is bounded:
\begin{equation}
        \operatorname{supp}(\hat{P}) \subseteq \mathcal{S} \Rightarrow \operatorname{diam}\left(\mathcal{S} ; {d}^\pi\right) \leq \frac{1}{1-\gamma}\left(R_{\max }-R_{\min }\right)
    \end{equation}
\end{theorem}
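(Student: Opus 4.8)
The plan is to prove existence and uniqueness of the fixed point by applying the Banach fixed-point theorem to the operator $\mathcal{F}^\pi$, and then to extract the diameter bound from a self-referential inequality that $d^\pi$ satisfies at the fixed point. Throughout I would work with the approximate-dynamics version of the operator, $\mathcal{F}^\pi(d)(u,v)=|R(u,\pi)-R(v,\pi)|+\gamma\mathcal{W}_1(d)(\hat{P}_u^\pi,\hat{P}_v^\pi)$.

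First I would fix the ambient space: let $\mathbb{M}$ be the set of bounded pseudometrics on $\mathcal{S}$ equipped with the uniform metric $\|d_1-d_2\|_\infty=\sup_{u,v}|d_1(u,v)-d_2(u,v)|$. This is where the hypotheses are used. Since $\mathcal{S}$ is compact and $\operatorname{supp}(\hat{P})$ is assumed to be a closed subset of $\mathcal{S}$, the support is a closed subset of a compact space and hence itself compact, thus complete and totally bounded. This guarantees both that $(\mathbb{M},\|\cdot\|_\infty)$ is complete and that the infimum defining $\mathcal{W}_1(d)(\hat{P}_u^\pi,\hat{P}_v^\pi)$ is attained by an optimal coupling supported on $\operatorname{supp}(\hat{P})\times\operatorname{supp}(\hat{P})$.

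Next I would show that $\mathcal{F}^\pi$ is a $\gamma$-contraction on $\mathbb{M}$. For any $d_1,d_2\in\mathbb{M}$ the reward-difference term $|R(u,\pi)-R(v,\pi)|$ is independent of the metric and cancels in $\mathcal{F}^\pi(d_1)-\mathcal{F}^\pi(d_2)$. For the transport term I would invoke the standard fact that $\mathcal{W}_1$ is nonexpansive in its underlying metric, i.e. $|\mathcal{W}_1(d_1)(\mu,\nu)-\mathcal{W}_1(d_2)(\mu,\nu)|\le\|d_1-d_2\|_\infty$ for fixed $\mu,\nu$, which follows by feeding the optimal coupling for one metric into the transport cost of the other. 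Taking the supremum over $u,v$ yields $\|\mathcal{F}^\pi(d_1)-\mathcal{F}^\pi(d_2)\|_\infty\le\gamma\|d_1-d_2\|_\infty$, so Banach's theorem delivers a unique fixed point $d^\pi$ with $\mathcal{F}^\pi(d^\pi)=d^\pi$. For the diameter bound I would then use the fixed-point identity directly: bounding the reward term by $R_{\max}-R_{\min}$, and noting that any coupling of distributions supported on $\operatorname{supp}(\hat{P})$ places mass only on pairs inside $\operatorname{supp}(\hat{P})$, so $\mathcal{W}_1(d^\pi)(\hat{P}_u^\pi,\hat{P}_v^\pi)\le\operatorname{diam}(\operatorname{supp}(\hat{P});d^\pi)\le\operatorname{diam}(\mathcal{S};d^\pi)$, gives $d^\pi(u,v)\le(R_{\max}-R_{\min})+\gamma\operatorname{diam}(\mathcal{S};d^\pi)$. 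Taking the supremum over $u,v$ and rearranging produces $\operatorname{diam}(\mathcal{S};d^\pi)\le\frac{1}{1-\gamma}(R_{\max}-R_{\min})$.

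The main obstacle is the completeness argument of the first step: without closedness of $\operatorname{supp}(\hat{P})$, a Cauchy sequence of iterates could converge to a limit whose Wasserstein costs are measured over a non-complete support, the optimal coupling need not exist, and the fixed-point iteration can fail to converge inside $\mathbb{M}$. Once completeness is secured, the contraction estimate and the rearrangement yielding the diameter bound are both routine.
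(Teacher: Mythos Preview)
Your approach matches the paper's: the paper defers existence and uniqueness to the Banach contraction argument (citing Lemma~5 of Kemertas and Aumentado-Armstrong) and then derives the diameter bound from exactly the same self-referential inequality $d^\pi(u,v)\le(R_{\max}-R_{\min})+\gamma\,\operatorname{diam}(\mathcal{S};d^\pi)$, taking the supremum over $(u,v)$ and rearranging. One small clarification: completeness of $(\mathbb{M},\|\cdot\|_\infty)$ and the contraction estimate hold irrespective of $\hat{P}$; the hypothesis $\operatorname{supp}(\hat{P})\subseteq\mathcal{S}$ is really needed so that $\mathcal{F}^\pi$ is well-defined on $\mathbb{M}$ (the transport cost never queries $d$ outside $\mathcal{S}$) and so that the Wasserstein term is bounded by $\operatorname{diam}(\mathcal{S};d^\pi)$, rather than to secure completeness of $\mathbb{M}$ or existence of optimal couplings.
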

\begin{proof}
    The proof adapts from~\cite{kemertas2021towards}, which is also a slight generalization of the distance bounds given in Theorem 3.12 of~\cite{ferns2011bisimulation}.
    \begin{equation}
        d^\pi(u,v)=|R(u,\pi)-R(v,\pi)|+\gamma W(d)(P(\cdot|u,\pi),P(\cdot|v,\pi))\leq R_\mathrm{max}-R_\mathrm{min}+\gamma \mathrm{diam}(\mathcal{S};d^\pi), \forall (u,v)\in \mathcal{S}\times\mathcal{S},
    \end{equation}
    with the use of Lemma 5 in~\cite{kemertas2021towards}, we have:
    \begin{equation}
    \begin{aligned}
        \mathrm{diam}(\mathcal{S};d^\pi) &\leq R_\mathrm{max}-R_\mathrm{min}+\gamma \mathrm{diam}(\mathcal{S};d^\pi)\\&\leq \frac{1}{1-\gamma}(R_\mathrm{max}-R_\mathrm{min})
    \end{aligned}
    \end{equation}
\end{proof}
In this paper, our bisimulation objective is defined as follows:
    \begin{equation}
\mathcal{F}^\pi d(s_i,s_j) = |r_{s_i}^\pi-r_{s_j}^\pi|+\gamma E_{\begin{subarray} {l}s_{i+1}\sim \hat{P}_{s_{i}}^a, \\s_{j+1}\sim \hat{P}_{s_{j}}^a\end{subarray}}[d(s_{i+1},s_{j+1})],
\end{equation}
where we sample the next state pairs from an approximated dynamics model RSSM instead of the ground-truth dynamics, and use the independent coupling instead of computing Wasserstein distance. In principle, iteration on conventional state space is acceptable with such a method. While in practice, the above requirement is hard to be satisfied as we learn state representation from an noisy observation space that includes unpredictable exogenous noise. 

\subsection{Counterexample}

Consider two vectorized states $u=(1, 2, 3, 1, 1), v=(2, 1, 1, 1, 1)$, where the last two dimension of these states are exogenous noise that irrelevant to the task. Under policy $\pi$, their next states are $u'=(2, 2, 1, 1, 1), v'=(1, 1, 2, 1, 1)$ respectively. Give $\gamma=0.92, r_u^\pi=0.03, r_v^\pi=0.02$, and with an error of $\epsilon=0.01$, we almost reach the optimal bisimulation distance:
\begin{equation}
\begin{aligned}
    d(u,v)=0.7955\\
    (r_u^\pi-r_v^\pi)+d(u',v')=0.7945\\
    \Delta=0.7955-0.7945=0.001<\epsilon.
\end{aligned}
\end{equation}
Meanwhile, the endogenous states $\bar{u}=(1,2,3), \bar{v}=(2,1,1)$, also achieve their optimal bisimulation distance:
\begin{equation}
\begin{aligned}
    d(\bar{u},\bar{v})=0.7638\\
    (r_u^\pi-r_v^\pi)+d(\bar{u}',\bar{v}')=0.7612\\
    \Delta=0.7638-0.7612=0.0026<\epsilon,
\end{aligned}
\end{equation}
while $u$ and $v$ still contain exogenous noise. That is to say, only with bisimulation principle is sufficient to learn task-relevant information, while not enough to learn compact representation.

\section{Additional Related Work Discussion}
\label{app:comparison}
In this section, we provide an additional related work description, and a detailed comparison between our model and other baselines that developed based on Dreamer, including TIA, Denoised MDP, DreamerPro, and RePo. 

\paragraph{State Representation Learning}
Recent advancements in Reinforcement Learning (RL) emphasize learning state representations to understand environment structures, with successful methods like CURL~\cite{laskin2020curl} and DrQ~\cite{kostrikov2020image, yarats2021mastering} using data augmentation techniques such as cropping and color jittering,  yet their efficacy is closely tied to the specific augmentation employed. Approaches like masking-based approaches~\cite{seo2023multi,yu2022mask, seo2023masked, liu2022masked} aim to reduce spatiotemporal redundancy but often overlook task-relevant information. Bisimulation-based methods~\cite{zhang2020learning, zang2022simsr} focus on learning reward-aware state representations for value-equivalence and sample efficiency, but they face challenges in achieving compact representation spaces since they sample consecutive states from approximated dynamics. Additionally, a branch of research investigates causality to discover causal relationships between state representation and control~\cite{wang2022causal, lamb2022guaranteed, islam2023principled, efroni2021provably, efroni2022sample, fu2021learning, zang2022behavior}. Our work primarily follows the methods based on bisimulation and masking, while developing a hybrid RSSM structure tailored for model-based agents.

\paragraph{TIA~\cite{fu2021learning}} extended Dreamer by creating a cooperative two-player game involving two models: the task model and the distractor model. The distractor model aims to disassociate from the reward as much as possible, while the task model focuses on capturing task-relevant information. Both models contribute to a reconstruction process involving an inferred mask in pixel-space. Although TIA shares similarities with our model, such as the use of masks and a dual-model framework, our hybrid RSSM structure differs in that it does not explicitly model exogenous noise, instead employing a random masking strategy. Moreover, our approach has lower time complexity than TIA, as we utilize a shared historical representation for both branches in the framework, eliminating the need for separate gradient computations. While TIA's learned mask effectively removes noise distractors through pixel-wise composition, it falls short in addressing more general noise types, such as observation disturbances caused by a shaky camera. From this perspective, investigating the potential solution of making masking strategy informed from the control task is still worthful for many approaches inlcuding TIA and ours.

\paragraph{Denoised MDP~\cite{wang2022denoised}}  classified RL information into four types based on controllability and its relevance to rewards, defining useful information as that which is controllable and reward-related. Their approach tends to overlook factors unrelated to control, even if they might influence the reward function. To address this, they introduced a variational mutual information regularizer to separate control and reward-relevant information from overall observations. While this method successfully distinguishes between task-relevant and irrelevant components, Denoised MDP demonstrated higher variance and moderate performance in distraction settings. This may be attributed to its continued reliance on pixel-level reconstruction, which, by focusing on minute details, could unintentionally diminish policy performance in distraction settings. Conversely, our method, eschewing pixel-level reconstruction, flexibly eliminates spatio-temporal redundancies while preserving semantic content, leading to enhanced performance.

\paragraph{DreamerPro~\cite{deng2022dreamerpro}} proposed a reconstruction-free MBRL agent by combining the prototypical representation learning with temporal dynamics learning. Borrowing idea from SwAV~\cite{caron2020unsupervised}, they tried to align the temporal latent state with the cluster assignment of the observation. However, their cluster assignment requires to apply the Sinkhorn Knopp algorithm~\cite{cuturi2013sinkhorn} to update prototypes. This requires more computational cost and more hyperparameters to tune. Besides, DreamerPro still cannot learn task-relevant information as its representation is not informed by reward.

\paragraph{RePo~\cite{DBLP:journals/corr/abs-2309-00082}} developed its representation in a way of maximizing mutual information (MI) between the current representation and all future rewards while minimizing the mutual information between the representation and observation. Excluding pixel-level reconstruction, they ensure latents predictable by optimizing a variational lower bound on the MI-objective which tractably enforces that all components are highly informative of reward. Though being task-specific and compact, RePo is highly sensitive to the hyper-parameters since their objective refer to Lagrangian formulation that includes various factors. Instead, our framework does not rely on hyper-parameter tuning, where we set all parameters fixed for all tasks. This further shows the robustness of our framework.

\section{Experimental Details}
\label{app:experiment}

\subsection{Model Architecture Details}
\label{app:model_detail}

We have developed our model based on DreamerV3, which employs RSSM to learn state representations and dynamics. We fix the input image size to 64 × 64 and use a image encoder which includes a 4-layer CNN with \{32, 64, 128, 256\} channels, a (4, 4) kernel size, a (2, 2) stride. As a result, our embedding size is 4096. 

We implement our dynamics model as a hybrid RSSM, which contains an online RSSM for the mask branch and an EMA RSSM for the raw branch, where the gradients only pass through the online RSSM. The online RSSM is composed of a GRU and MLPs. The GRU, with 512 recurrent units, is used to predict the current mask recurrent state based on the previous mask recurrent state, the previous mask posterior stochastic representation, and the previous action. All stochastic repersentations are sampled from a vector of softmax distributions, and we use straight-through estimator to backpropagate gradients through the sampling operation.
The EMA RSSM has the same structure as the online RSSM.
The size of mask recurrent states $h^m_t$ is 512 and the size of stochastic representations $z^m_t$ and $\hat{z}^m_t$ is 32 × 32. 
The reward predictor, the continue predictor, the transition predictor, the value function, and the actor are all MLPs with two hidden layers, each with 512 hidden units. And we use symlog predictions and the discrete regression approach for the reward predictor and the critic. We use layer normalization and SiLU as the activation function, and update all the parameters using the Adam optimizer.

Notably, despite incorporating dual RSSMs, \textit{i.e.}, mask branch and raw branch, in our framework, our model maintains a slightly smaller overall size (17.54M) compared to the original DreamerV3 (18.22M), which is notable considering the substantial size of the decoder parameters in DreamerV3. Furthermore, our model is also time-efficient due to the removal of the time-cost decoder and the use of a shared historical representation for both branches within the framework.

\subsection{Baselines}
For DreamerV3, we use an unofficial open-sourced pytorch
version of DreamerV3~\cite{dreamerv3torch} as the baseline, and we build our framework on top of it for fair comparison. For RePo and SimSR, we use the official implementation and the reported hyperparameters in their papers. As for other baselines, we simply adopt the data from results reported in RePo.

\subsection{Environment Details}

\paragraph{DMControl tasks with  default settings}
This setting consists several continuous control tasks, wherein the agent solely receives high-dimensional images as inputs. These tasks include \textit{walker\_stand}, where a bipedal agent, referred to as ``walker", is tasked with maintaining an upright position; \textit{walker\_walk} and \textit{walker\_run}, which require the walker to move forward; and \textit{cheetah\_run}, where a bipedal agent, the ``cheetah", is required to run forward rapidly. We also utilized \textit{cartpole\_swingup}, a task involving a pole and cart system where the goal is to swing up and balance the pole; \textit{reacher\_easy}, which involves controlling a two-link reacher, to reach a target location; and \textit{finger\_spin}, where a robotic finger is tasked with continually rotating a body on an unactuated hinge.
We set the time limit to 1000 steps and the action repeat to 2 for all tasks.

\paragraph{DMControl tasks with distraction settings}
To evaluate our model's ability to learn exo-free policy, we test our model in the distraction settings of DMControl. In this setting, we follow DBC~\cite{zhang2020learning} and replace DMControl's simple static background with 1000 frames grayscale videos from the Kinectics-400 Dataset~\cite{kay2017kinetics}, and set the time limit to 1000 steps and the action repeat to 2 for all tasks.

\paragraph{Realistic Maniskill}
This benchmark is based on the Maniskill 2 ~\cite{gu2023maniskill2} environment, which encompasses a variety of tasks for the agent to learn to master human-like manipulation skills. To evaluate our model's ability to learn policy in realistic environments, we follow RePo's setting and use realistic backgrounds from Matterport~\cite{chang2017matterport3d} as distractors. We use 90 scenes from Matterport3D, which are randomly loaded when the environment is reset as distractions for Realistic Maniskill. We set the time limit to 100 steps and the action repeat to 1 for all tasks. We test our method and baselines on the tasks \textit{Lift Cube} and \textit{Turn Faucet}: in \textit{Lift Cube}, the agent is required to elevate a cube beyond a specified height, while in \textit{Turn Faucet}, the agent must turn on the faucet by rotating its handle past a target angular distance.

\section{Additional Experiments}
\label{app:additional_exp}

\subsection{Additional performance comparison}
We present the learning curve of the methods in the default setting in Figure~\ref{app:fig:default_curve}, which is similar to Figure~\ref{fig:defaut_results} while in a different form of visualization.

\begin{figure}[tbp]
  \begin{center}
\centerline{\includegraphics[width=0.7\linewidth]{./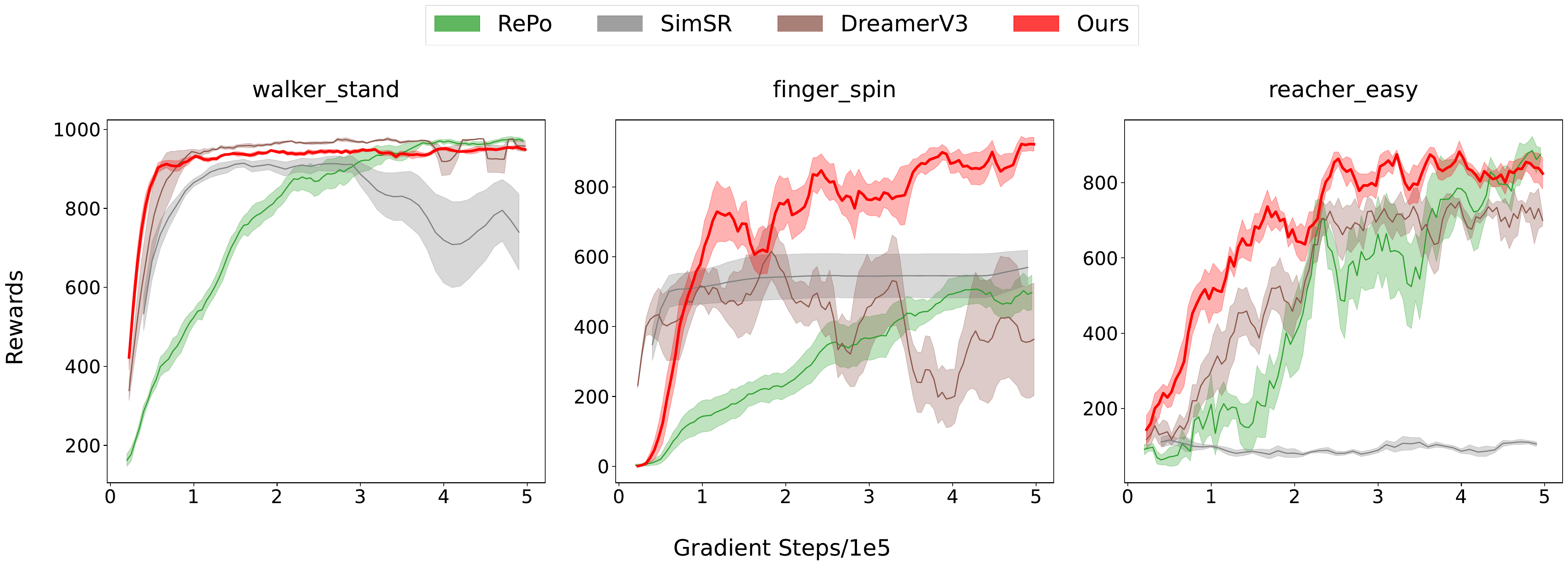}}
  \caption{Performance comparison on DMC tasks over 6 seeds in the default setting.}
  \label{app:fig:default_curve}
  \end{center}
  \end{figure}

\begin{figure}[hp]
    \centering
    \subfigure{
        \includegraphics[width=0.42\textwidth]{./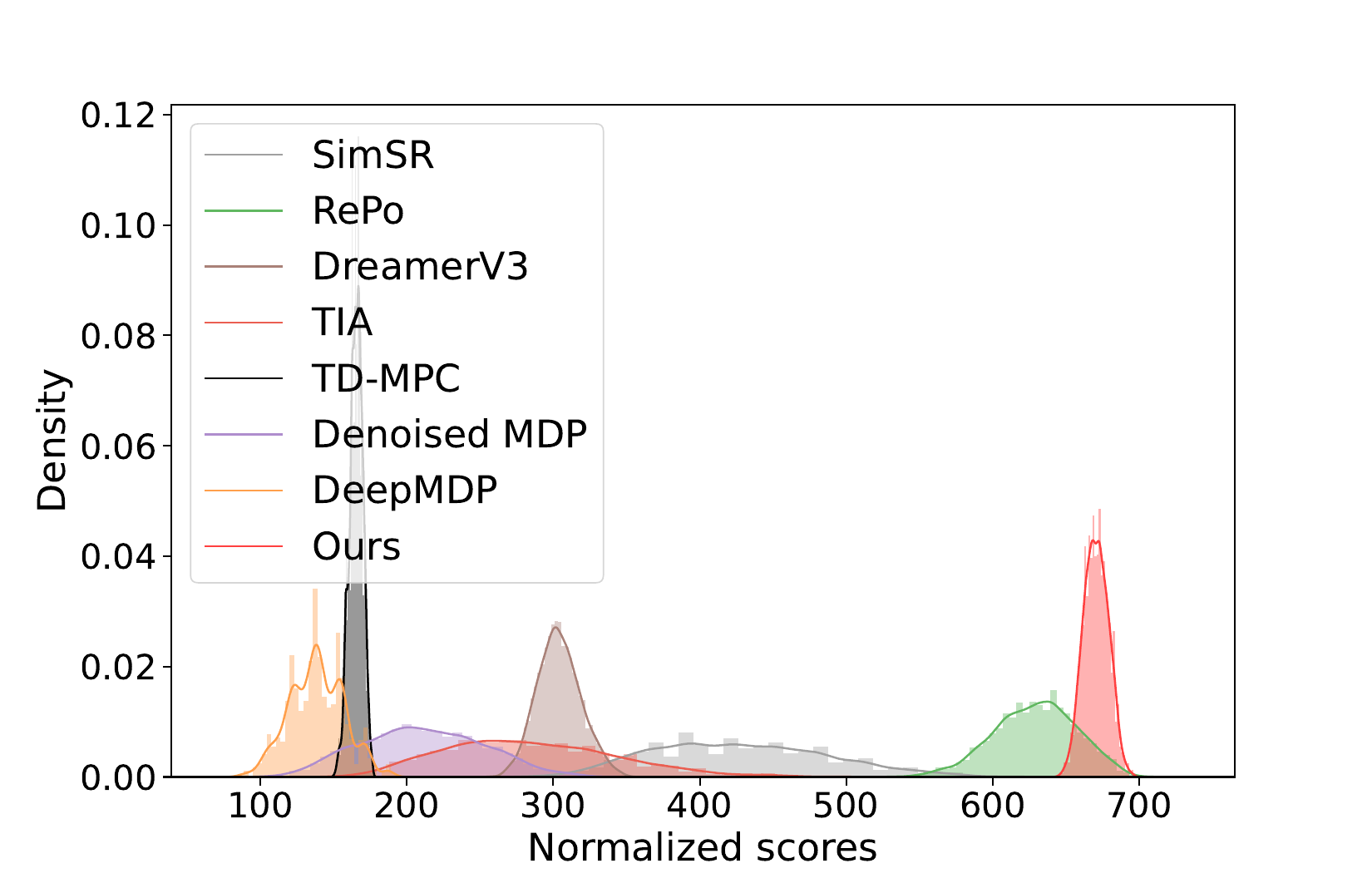}
    }
    \hspace{2mm}
    \subfigure{
        \includegraphics[width=0.42\textwidth]{./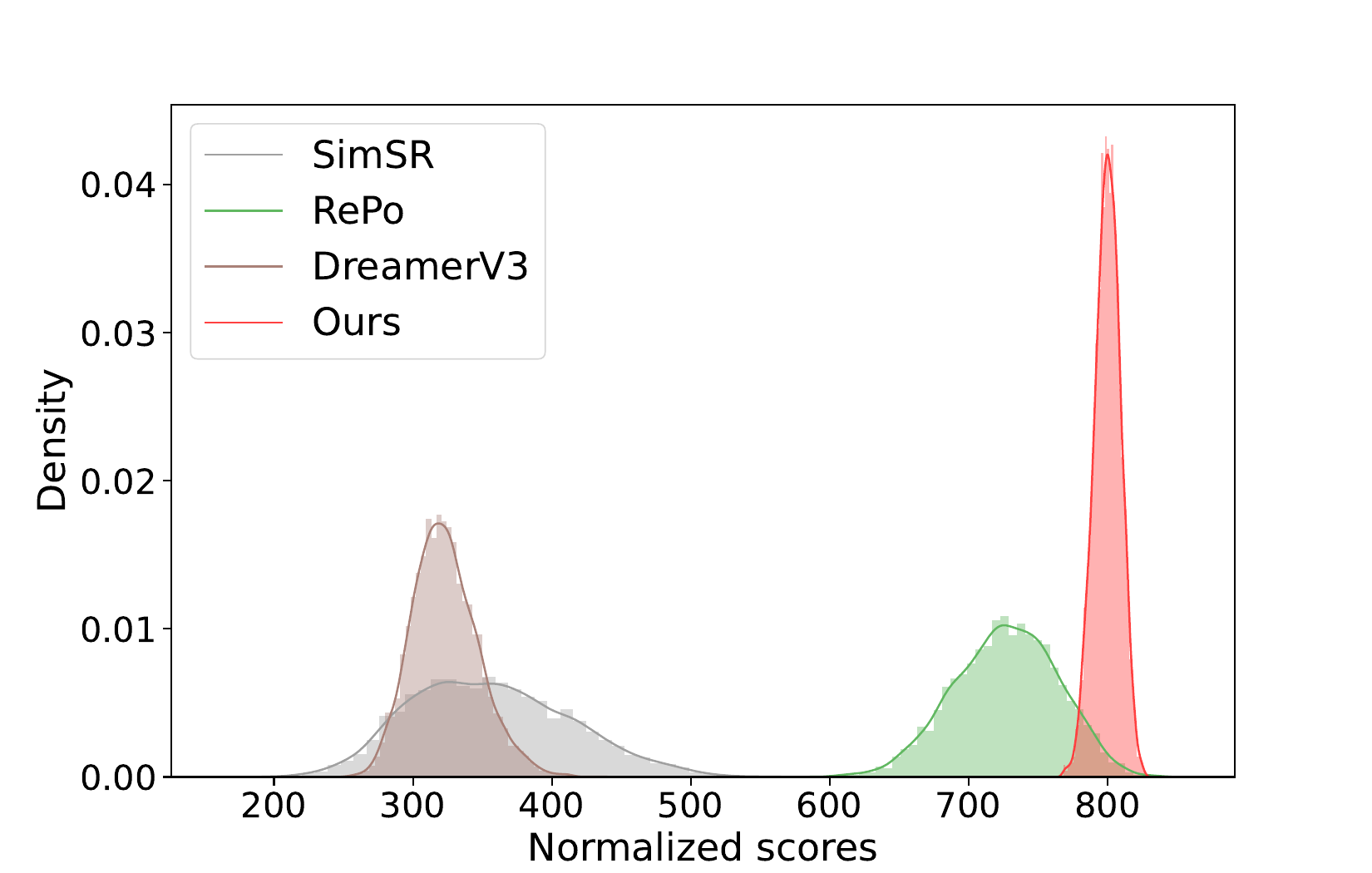}
    }
    \caption{Bootstrapping distributions for uncertainty in IQM (i.e. inter-quartile mean) measurement on DMC tasks in the distraction setting. (\textbf{left}) Averaged on 3 seeds. (\textbf{right}) Averaged on 6 seeds.}
    \label{fig:iqm}
\end{figure}

To further statistically illustrate the effectiveness of our model, we present the bootstrapping distributions for uncertainty in IQM (i.e. inter-quartile mean) measurement on DMC tasks in the distraction setting, following from the performance criterion in~\cite{DBLP:conf/nips/AgarwalSCCB21}. Given that the performance results for certain algorithms have been sourced from \cite{DBLP:journals/corr/abs-2309-00082} and are based on the average across three random seeds, we are unable to calculate the Interquartile Mean (IQM) for all methods with six seeds. Consequently, we present two sets of IQM results in Figure~\ref{fig:iqm}. The first set includes all compared methods and is averaged across three seeds. The second set, which we have derived from re-running and evaluating three representative methods, is based on an average across six seeds, providing us with a more robust statistical measure. The result shows that the final performance of our proposed model is statistically better than all other baselines.

\subsection{Wall clock time comparison}
\label{app:wall_clock}
\begin{wrapfigure}{rt}{0.4\textwidth}

  \begin{center}
    \includegraphics[width=0.5\linewidth]{./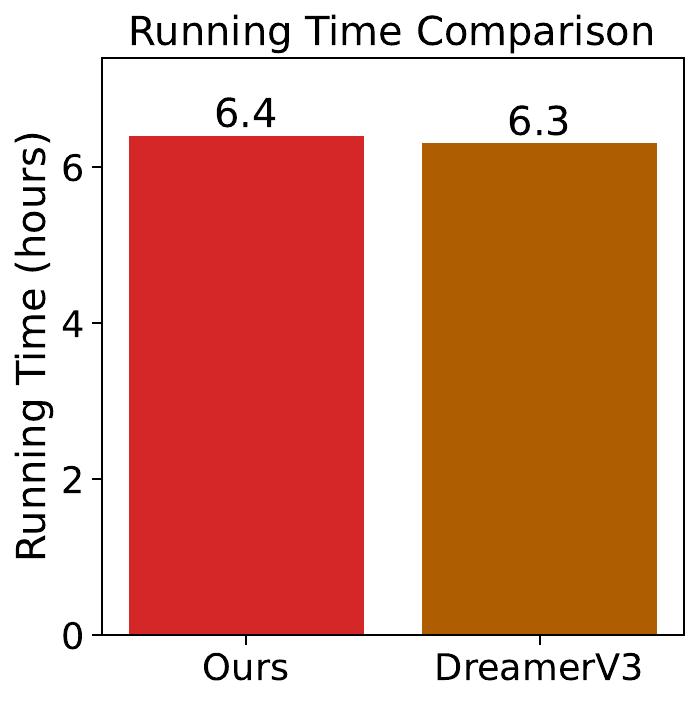}
  \end{center}
  
  \caption{Training Time Comparison on \textit{Lift Cube} task.}
  \label{fig:running_time}

\end{wrapfigure}
We compare the wall-clock traning time of our method and DreamerV3 in the Realistic Maniskill environment, with the use of a sever with NVidia A100SXM4 (40 GB memory) GPU. Figure ~\ref{fig:running_time} shows that the running time of our method almost matches DreamerV3, which represents that our model can achieve significant performance improvements at a lower cost, in the presence of exogenous noise, shows that our method can learn effective representations faster.

\subsection{Ablation studies}

\label{app:ablation}
We evaluate the effectivenss of different components of our model by running the ablation experiments on the DMControl's environment with exogenous noise. All results in this section are averaged across 3 seeds.

\paragraph{Masking-based latent reconstruction and Bisimulation principle}
Our architecture comprises two main components: masking-based latent reconstruction and a similarity-based objective that follows the bisimulation principle. To assess their effectiveness, we conducted ablation studies by excluding each component individually. Specifically, to evaluate the importance of masking-based latent reconstruction, we removed the mask branch, converting our hybrid RSSM back to a standard RSSM and omitting both the cubic masking and the latent reconstruction loss. To assess the bisimulation principle, we removed only the similarity loss while maintaining all other components. 

The results, as shown in Figure \ref{app:fig:ablation_study}, reveal that adding just the similarity-based objective to the DreamerV3 framework does not consistently improve sample efficiency across all tasks. This approach often results in the lowest performance, except in the \textit{cartpole\_swingup} task. In tasks like \textit{reacher\_easy}, the agent fails to develop an acceptable policy, significantly lagging behind in performance compared to other ablations. These findings confirm our theoretical analysis: applying the bisimulation principle directly to model-based agents faces challenges due to the use of an approximate dynamics model for sampling consecutive state representations.

Conversely, utilizing masking-based latent reconstruction generally leads to higher final performance than solely relying on a similarity-based objective. Notably, in nearly half of the tasks, the model with only masking-based latent reconstruction performs comparably to our complete framework, indicating that spatio-temporal information is indeed sparse for these control tasks. Nevertheless, our framework, which includes both components, consistently achieves better performance in most tasks, supporting the necessity of these components. Interestingly, in the \textit{cartpole\_swingup} task, the model with only a similarity-based objective outperforms our full framework, suggesting that the integration of both components is not optimal. A possible explanation is that our masking strategy, which is not selectively applied to exogenous noise but rather uses random masking, might inadvertently impact the endogenous state in some contexts.

\begin{figure*}[ht]
  \begin{center}
  \centerline{\includegraphics[width=0.6\textwidth]{./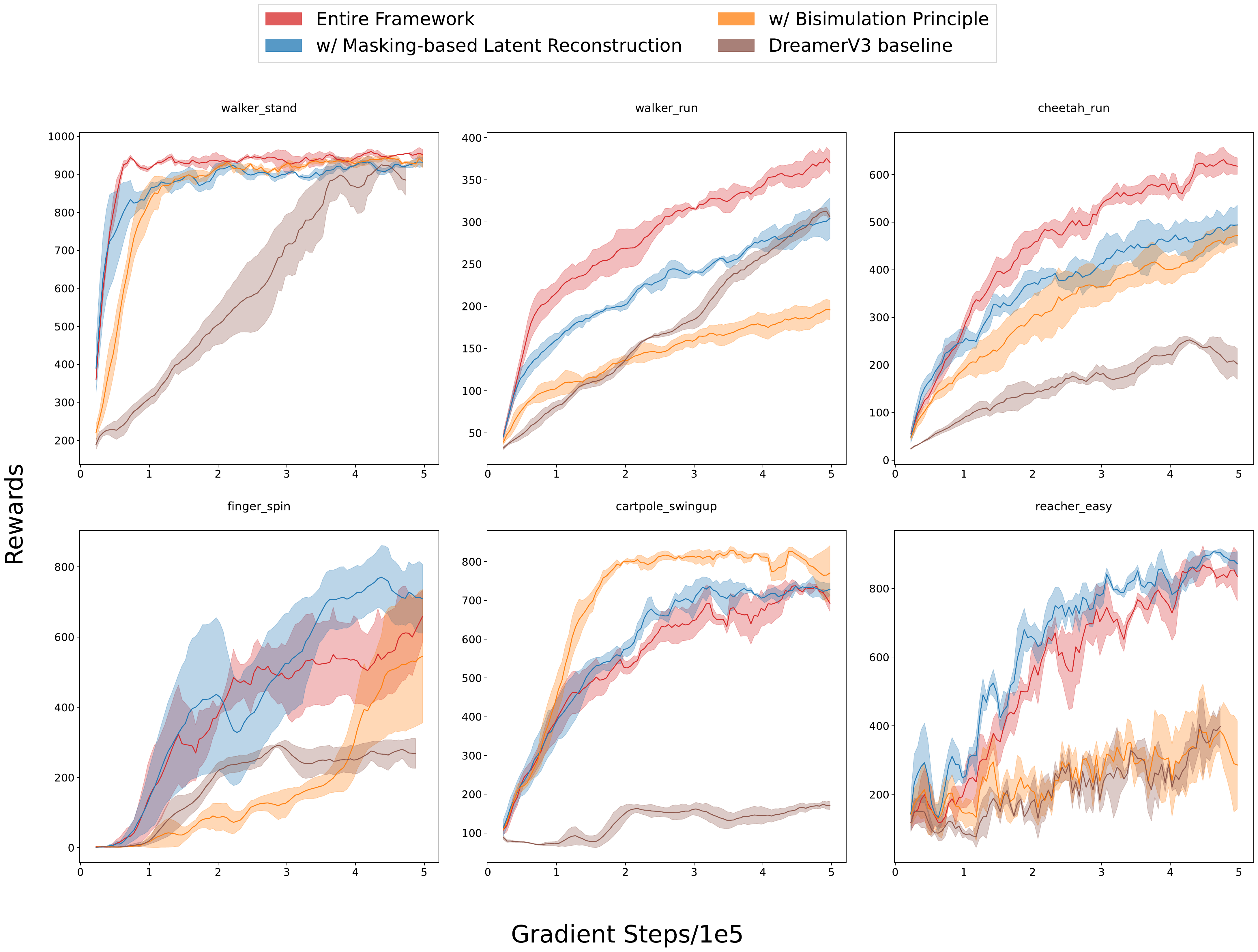}}
  \caption{Results of ablation study on masking-based latent reconstruction and the bisimulation principle.}
  \label{app:fig:ablation_study}
  \end{center}
  \end{figure*}

\paragraph{Normalization for the predictiors}
Our framework incorporates four distinct objectives: latent reconstruction, similarity loss, reward prediction, and episode continuation prediction. For latent reconstruction and similarity loss, we employ normalized state representations because $\ell_2$-normalization ensures that the resulting features are embedded in a unit sphere, which is beneficial for learning state representations. However, the appropriateness of using $\ell_2$-normalization for predicting rewards and episode continuation is not immediately clear. Conventionally, for reward prediction, the exact state representation should be used rather than the normalized one. To investigate this, we conducted an ablation study on the effectiveness of normalization for these two predictors.

The results, illustrated in Figure~\ref{app:fig:ablation_l2_state}, indicate that normalization may introduce unwanted biases into the predictions, leading to a decrease in performance and increased variance. Therefore, we choose un-normalized representation for reward prediction and continuation prediction.

\begin{figure*}[ht]
  \begin{center}
  \centerline{\includegraphics[width=0.6\textwidth]{./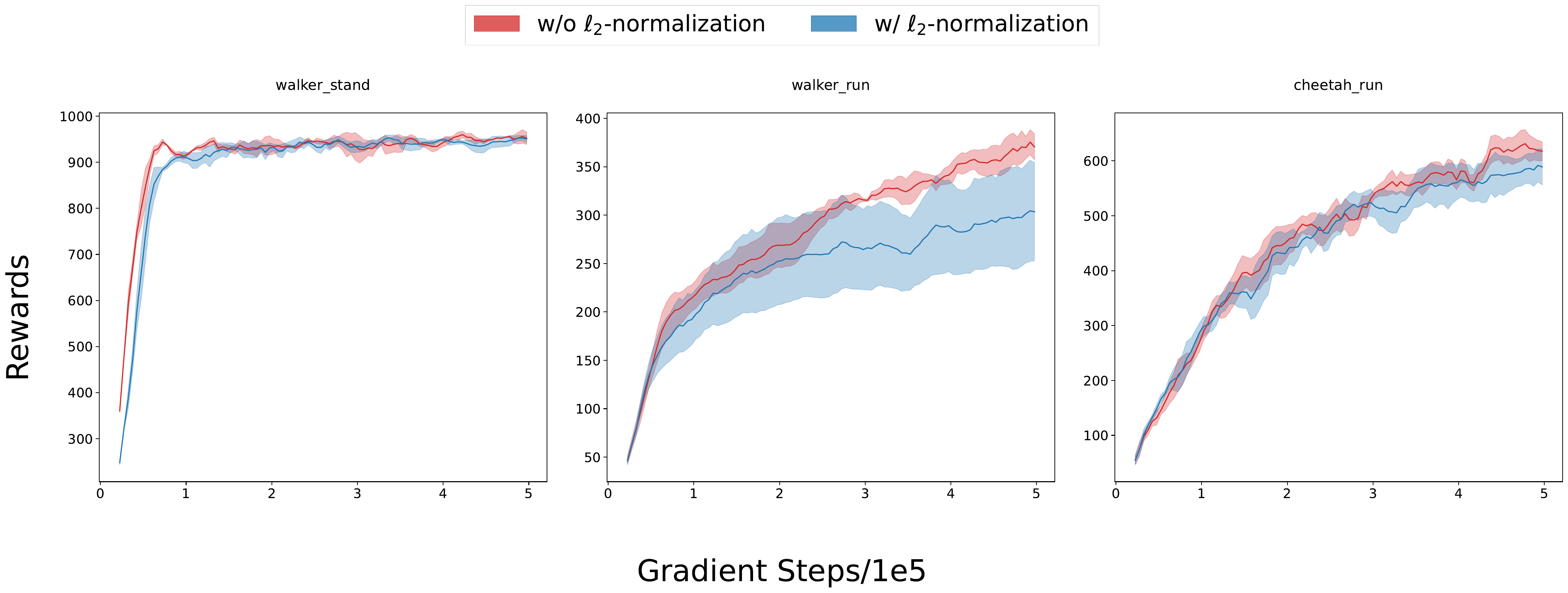}}
  \caption{Results of ablation study on $\ell_2$-Normalizion for predictiors.}
  \label{app:fig:ablation_l2_state}
  \end{center}
  \end{figure*}

\paragraph{Mask-based Similarity Loss}
To integrate the masking strategy with similarity loss, we apply a mask to one state in each pair according to the formula, while keeping the other state unmasked. An alternative approach is to use the masked state representation as the current sample pair and the unmasked ones as the consecutive sample pair, \textit{i.e.},
\begin{equation}
  \begin{aligned}
  \mathcal{L}_\text{sim}&:=\left(d(s_i^m,s_j^m)-\mathcal{F}^\pi d(s_i,s_j)\right)^2\\&=\left(d(s_i^m,s_j^m)-\left(|r_{s_i}^\pi-r_{s_j}^\pi|+\gamma d(\hat{s}_{i+1},\hat{s}_{j+1})\right)\right)^2
  \label{app:eq:another_beh_loss}
  \end{aligned}
  \end{equation}
However, the latter approach may compromise the consistency between the two branches. This is confirmed in Figure ~\ref{app:fig:ablation_simsr_type}, which demonstrates that the first approach is more effective, particularly in tasks like \textit{finger\_spin}. This effectiveness can likely be attributed to the inherent complexity of the task dynamics. The motion of the manipulated object is influenced not only by the actions of the controllable finger but also by the object's intrinsic inertia, as it undergoes rotational motion. This complexity introduces stochasticity and instability into the environment, posing a significant challenge to dynamics modeling and adversely affecting performance, especially as the policy requires forward-looking dynamics modeling. This ablation study underscores the importance of maintaining consistency between the two branches across various tasks.

\begin{figure*}[tbp]
  \begin{center}
  \centerline{\includegraphics[width=0.6\textwidth]{./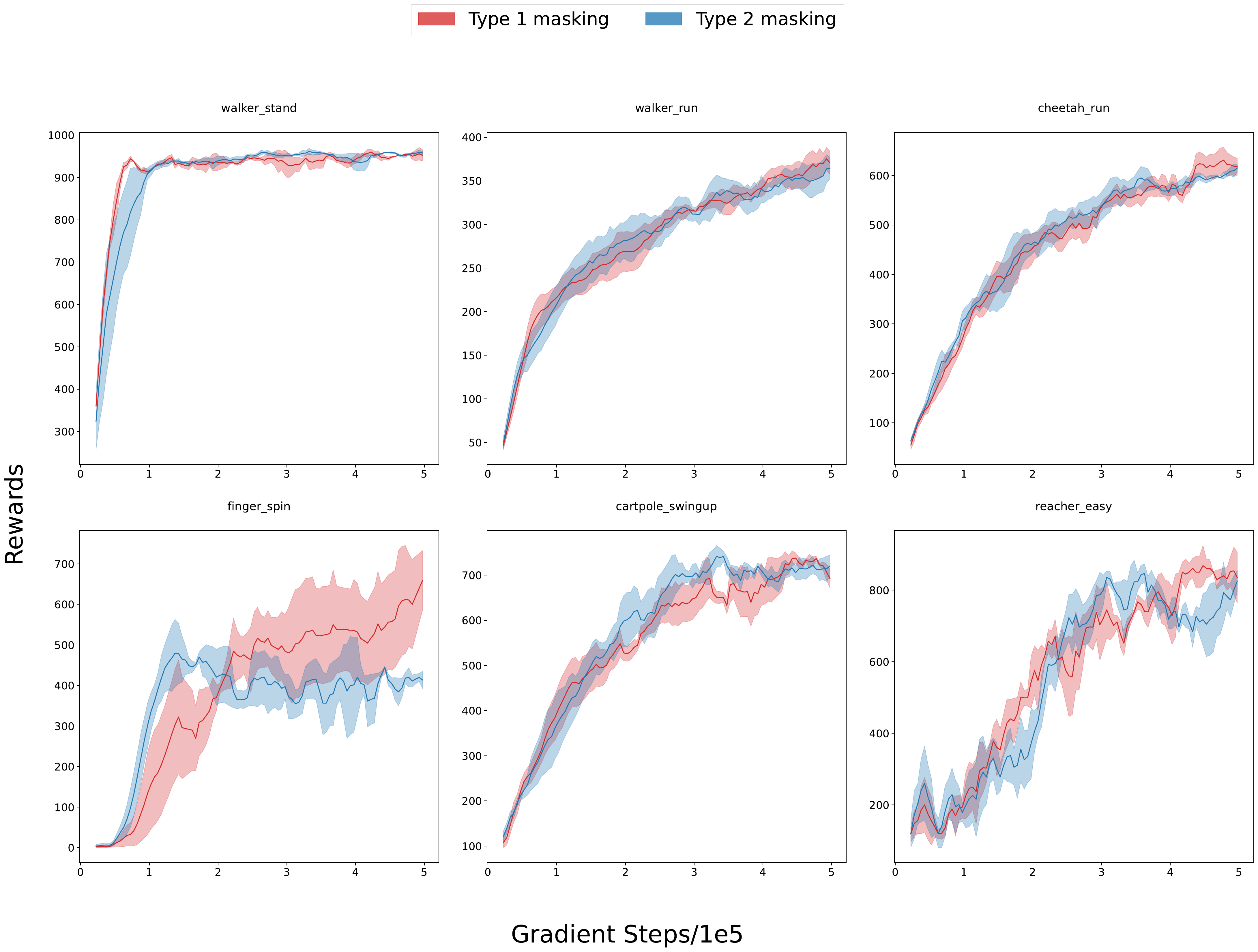}}
  \caption{Results of ablation study on masking strategy for similarity loss.}
  \label{app:fig:ablation_simsr_type}
  \end{center}
  \end{figure*}
  
\begin{figure}[hp]
    \centering
    \subfigure{
        \includegraphics[width=0.3\textwidth]{./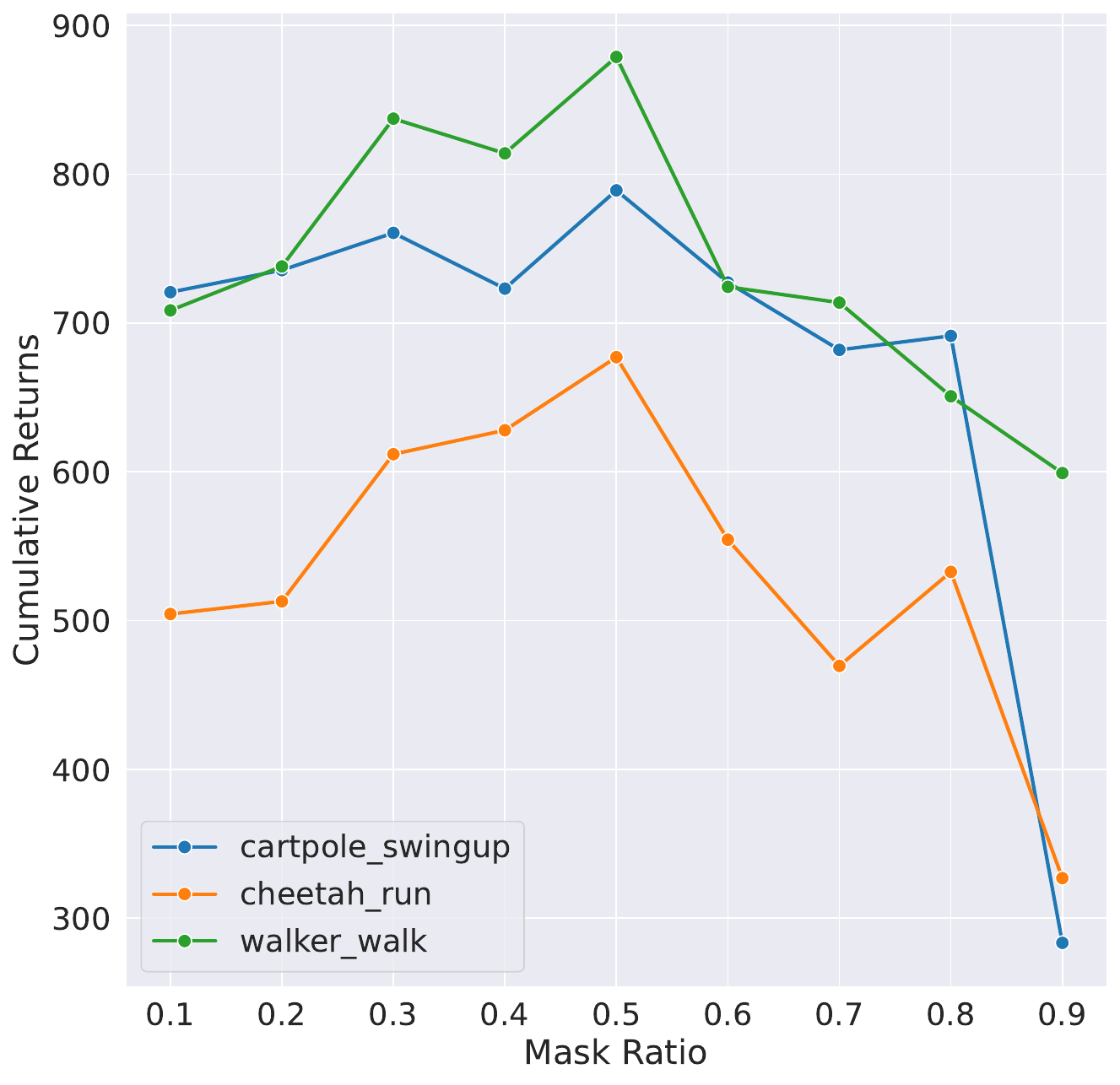}
    }
    \hspace{2mm}
    \subfigure{
        \includegraphics[width=0.3\textwidth]{./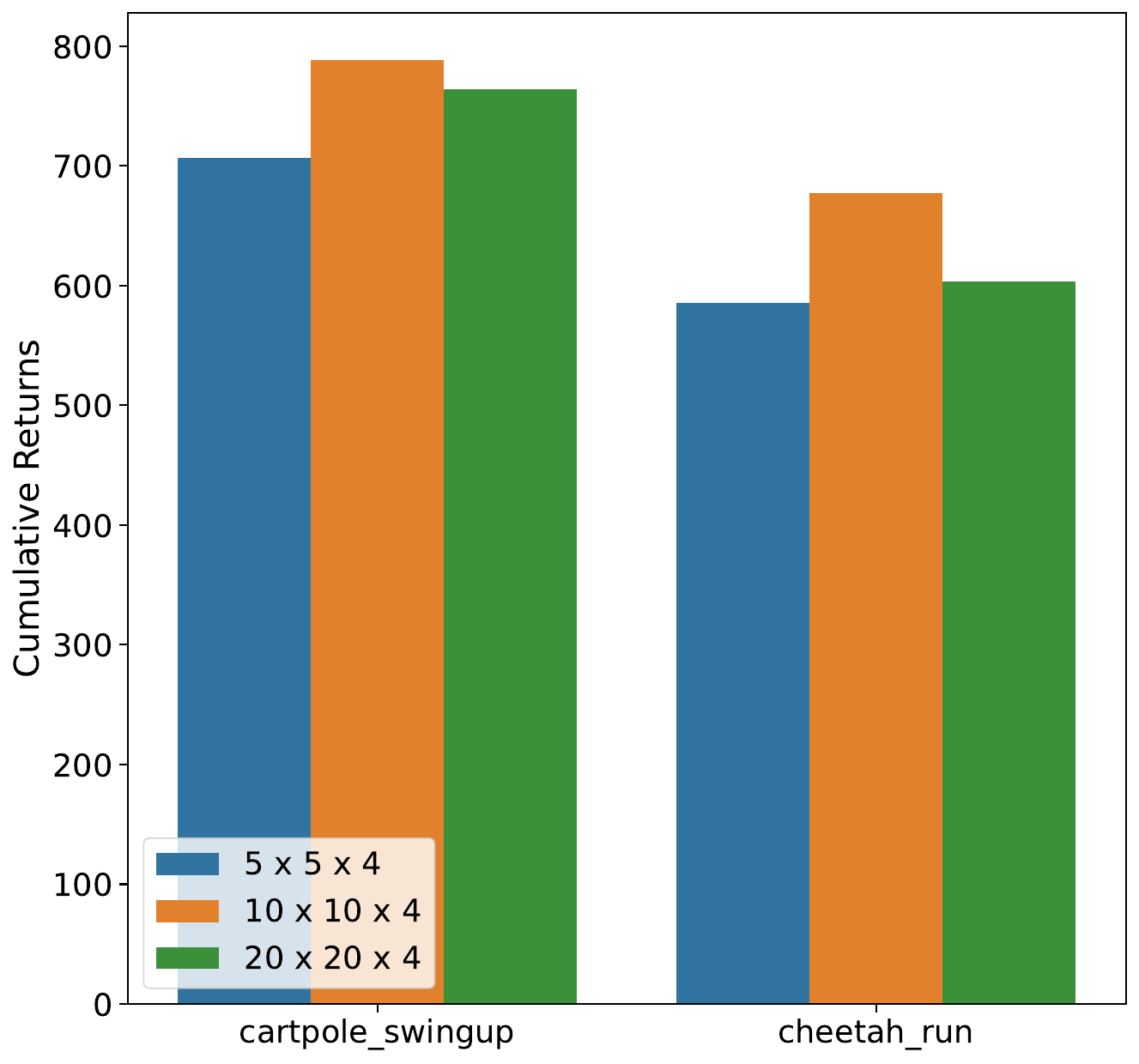}
    }
    \caption{(\textbf{left}) Comparison of different mask ratios in 3 different environments. The final returns are computed at 500k gradient steps updates. (\textbf{Right}) Comparvison of different patch sizes in 2 different environments. The final returns are computed at 500k gradient steps updates.}
    \label{fig:ablation_maskratio_and_patchsize}
\end{figure}
\paragraph{Masking ratio}
We conducted an investigation into the impact of varying mask ratios on the performance of models across different diverse tasks in distraction settings, the result of each task is averaged with three distinct random seeds. The masking ratio is selected within a range of 0.1 to 0.9, with the interval of 0.1. The results  are depicted in Figure~\ref{fig:ablation_maskratio_and_patchsize}.
Contrary to the widely-held assumption that image and video data inherently carry a significant degree of superfluous information, our research indicates that the ideal mask ratio for tasks involving sequential control stands at around 0.5. This is notably lower than the nearly 0.9 mask ratio commonly used in computervision domain, as reported in studies such as \cite{he2022masked} and \cite{videomae}. We believe that this discrepancy can be attributed to the control tasks need to retain more spatiotemporal information than CV tasks to facilitate the sequential control tasks. 


\paragraph{Cuboid Patch Size} 
We also experimented with different cuboid patch sizes , as $(5\times 5\times 4)$, $(10\times 10\times 4)$, and $(20\times 20\times 4)$ respectively. Throughout the experiments, we maintained a masking ratio of 0.5. The results in Figure~\ref{fig:ablation_maskratio_and_patchsize} indicate that the patch size of $10\times 10\times 4$ outperformed both the other two choices. We believe that smaller patch sizes retain unnecessary information, while larger patch sizes may introduce unsuitable masking. Therefore, choosing a moderate patch size is crucial, and in our experiment, we selected $(10\times 10\times 4)$ as the default patch size.

\subsection{Interpretability visualizations}
To verify that our model is indeed capable of filtering task-irrelevant redundancy and learning task-specific features, we utilized the Gradient-weighted Class Activation Mapping (Grad-CAM) technique for feature visualization, as proposed by Selvaraju et al.\cite{selvaraju2017grad}. We generate saliency maps for DMC tasks, and then create a binary map, assigning a value of 1 to pixels in the top 5\% of intensity values, and
0 otherwise, as illustrated in Figure\ref{app:fig:heatmap}.


The results demonstrate that DreamerV3 baseline does not capture any meaningful information that relevant to the task. On contrary, RePo and Our model are proficient at capturing the essential information in images. While RePo struggle to precisely identify the control-relevant parts of the image inputs as it also focus on some irrelevant background noises like the reflections on the surface, our model filter out background noise and focuses on the objects that is crucial for control tasks effectively. These results confirm HRSSM's capability to maintain task-relevant information from visual inputs with exogenous noise.



\begin{figure*}[htbp]
  \begin{center}
  \centerline{\includegraphics[width=0.6\textwidth]{./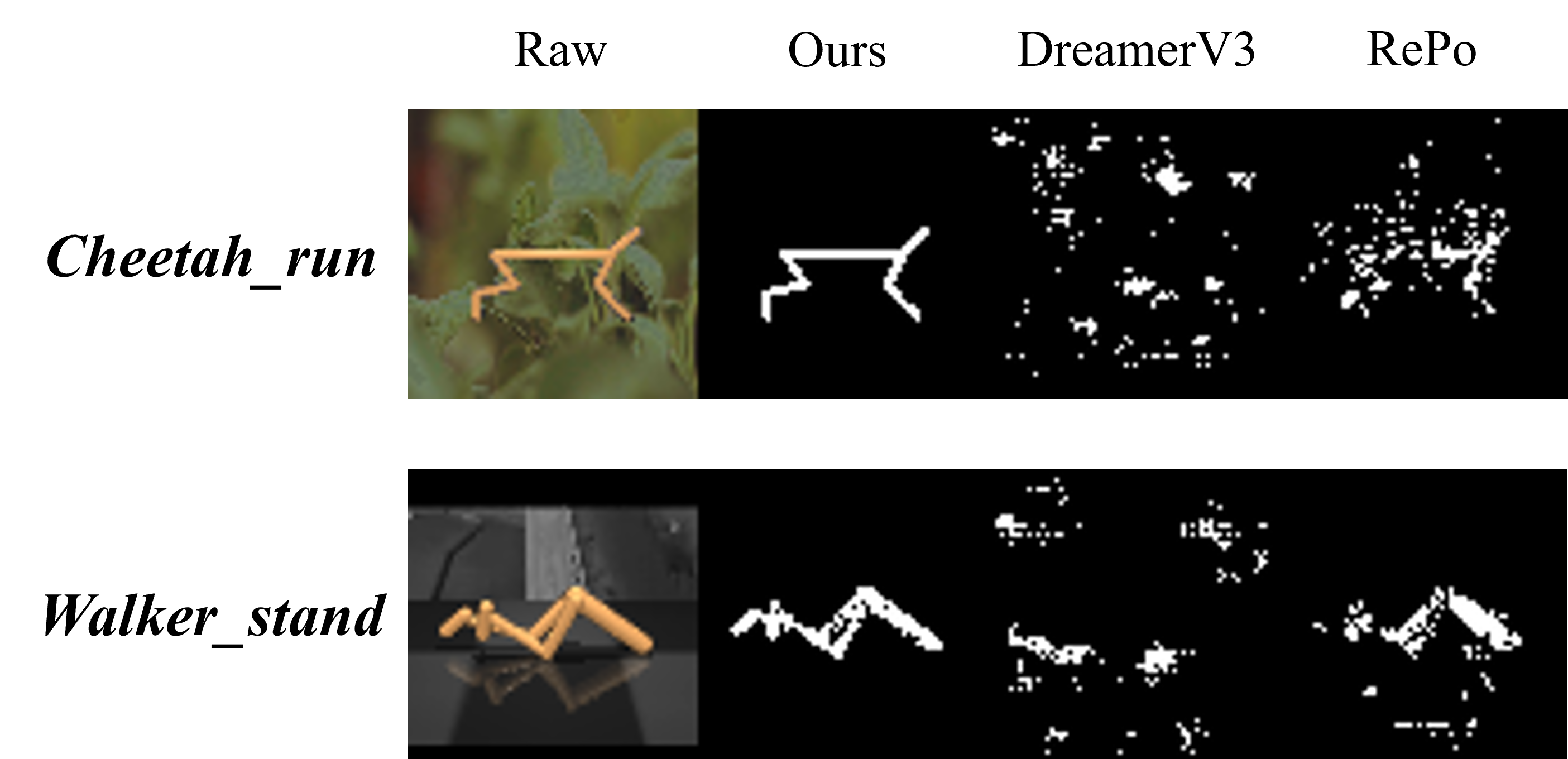}}
  \caption{The feature visualization of the learned representations using Grad-CAM.}
  \label{app:fig:heatmap}
  \end{center}
  \end{figure*}
\subsection{More distractions}
To evaluate the sample-efficiency and generalization ability of our model, we conduct several different distractions with different nature of noise. Specifically, we have nine different distraction types in total, including the ones we benchmarked in main paper. Examples are in Figure~\ref{app:fig:all_distractions}. They are:

\begin{figure*}[htbp]
  \begin{center}
  \centerline{\includegraphics[width=0.9\textwidth]{./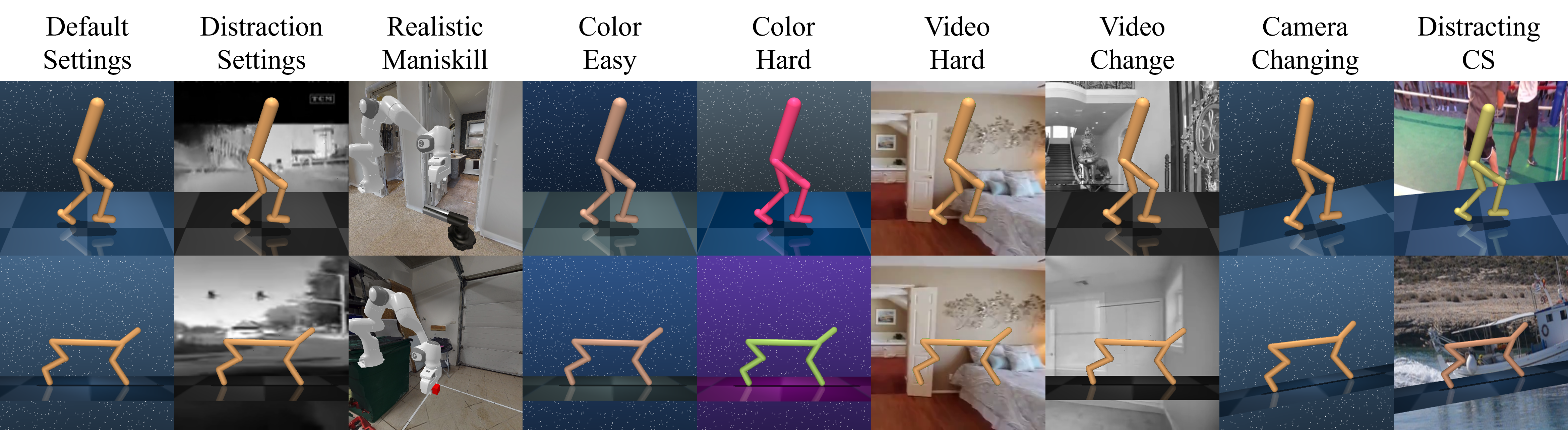}}
  \caption{All kinds of distractions.}
  \label{app:fig:all_distractions}
  \end{center}
  \end{figure*}
  
\begin{table}[ht]
\caption{Overview of Distractions.}
\label{tab:related_works}
\centering
\begin{tabular}{l|c c c c c c c c}
\hline
\makecell{Tasks} & \makecell{Distraction} \\ 
\hline
\makecell{DMC tasks with default settings} & no distraction \\
\makecell{DMC tasks with distraction settings} & video distraction in background \\
\makecell{Realistic Maniskill} & image distraction in background \\
\makecell{Color\_easy in DMC-GS} &  colors slightly change for both agent and background\\
\makecell{Color\_hard in DMC-GS} &  colors dramatically change for both agent and background\\
\makecell{Video\_hard in DMC-GS} &  video distraction in background, the surface is not visible \\
\makecell{Video\_category\_changing} & different set of video distraction in background \\
\makecell{Camera\_changing} & camera positions change\\
\makecell{Distracting\_CS} &  all distractions (color, video, camera)\\
\hline
\end{tabular}
\vspace{-1.em}
\end{table}

\paragraph{DMC tasks with default settings} This is the default setting of DMC tasks without any distractions. It can be seen as the ideal setting in the realistic tasks.
\paragraph{DMC tasks with distraction settings} In this setting, we test the agent in an environment with the background disturbed by the videos from Kinetics dataset~\cite{kay2017kinetics} with the label of driving\_car. During the training and evaluation, the environments are both disturbed by the same category of videos, so it is possible that the agent evaluated on the environments that have been seen.
\paragraph{Realistic Maniskill} Similar to DMC tasks with distraction settings. Further, to simulate real-world scenarios, we replace the default background with realistic scenes from the Habitat Matterport dataset~\cite{ramakrishnan2021habitat}, curating 90 different scenes and randomly loading a new scene at the beginning of each episode. So it can be viewed as image distraction in background.
\paragraph{Color\_easy in DMC-GS} One setting from DeepMind Generalization Benchmark\cite{hansen2021generalization}. We randomize the color of background, floor, and the agent itself, while the colors used are similar to the colors of the original object.
\paragraph{Color\_hard in DMC-GS} One setting from DeepMind Generalization Benchmark\cite{hansen2021generalization}. Similar to Color\_easy, while the colors used is totally different from the colors of the original object.
\paragraph{Video\_hard in DMC-GS} One setting from DeepMind Generalization Benchmark\cite{hansen2021generalization}. Similar to DMC tasks with distraction settings, while the surface is no longer visible.
\paragraph{Video\_category\_changing} A variation of DMC tasks with distraction settings. During the evaluation, we use a totally different category of videos as background, which makes the testing environments all unseen.
\paragraph{Camera\_changing} A variation from Distracting Control Suite (Distracting\_CS)~\cite{DBLP:journals/corr/abs-2101-02722} benchmark. We change the span of camera poses and the camera velocity continually throughout an episode.
\paragraph{Distracting\_CS} Distracting Control Suite (Distracting\_CS)~\cite{DBLP:journals/corr/abs-2101-02722} benchmark is extremely challenging, where camera pose, background, and colors are continually changing throughout an episode. The surface remains visible, such that the agent can orient itself during a changing camera angle.

With the empirical findings presented for the first three distractions in Section~\ref{sec:experiments}, our analysis now focuses on the agents' performance against the remaining six distractions. Due to the time limitation, we only have two baseline algorithms tested in these distraction settings in our comparison: SVEA~\cite{hansen2021stabilizing} and RePo~\cite{DBLP:journals/corr/abs-2309-00082}. SVEA is a model-free framework that enhances stability in Q-value estimation by selectively applying data augmentation, optimizing a modified Bellman equation across augmented and unaugmented data. For RePo, we search several combinations of hyperparameters, and choose the best hyperparameter pair in the DMC tasks with distraction setting as default for each task. All average returns are averaged by 3 different random seeds. Notably, for Video\_category\_changing setting, we directly evaluate the performance of the agent previously trained on DMC tasks with distraction settings, where we provide the agent's final performance metrics rather than a performance progression curve. The results from Figure~\ref{app:fig:color_easy} to Figure~\ref{app:fig:distracting_cs} and Table~\ref{app:tab:video_category_changing} show that, our model consistently achieve the highest final performance and the best sample-efficiency among the most distractions and most tasks, which indicate that our model's robustness and generalization ability across these kinds of distractions.

\begin{figure*}[htbp]
  \begin{center}
  \centerline{\includegraphics[width=0.6\textwidth]{./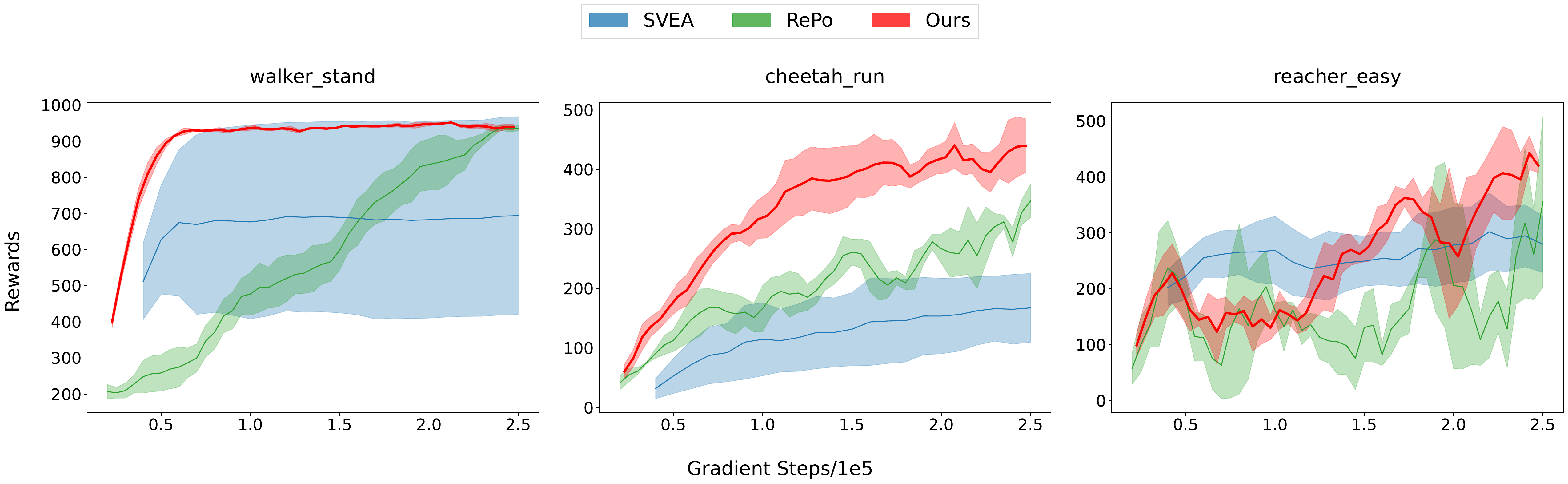}}
  \caption{Performance comparison on Color\_easy in DMC-GS.}
  \label{app:fig:color_easy}
  \end{center}
  \end{figure*}

\begin{figure*}[htbp]
  \begin{center}
  \centerline{\includegraphics[width=0.6\textwidth]{./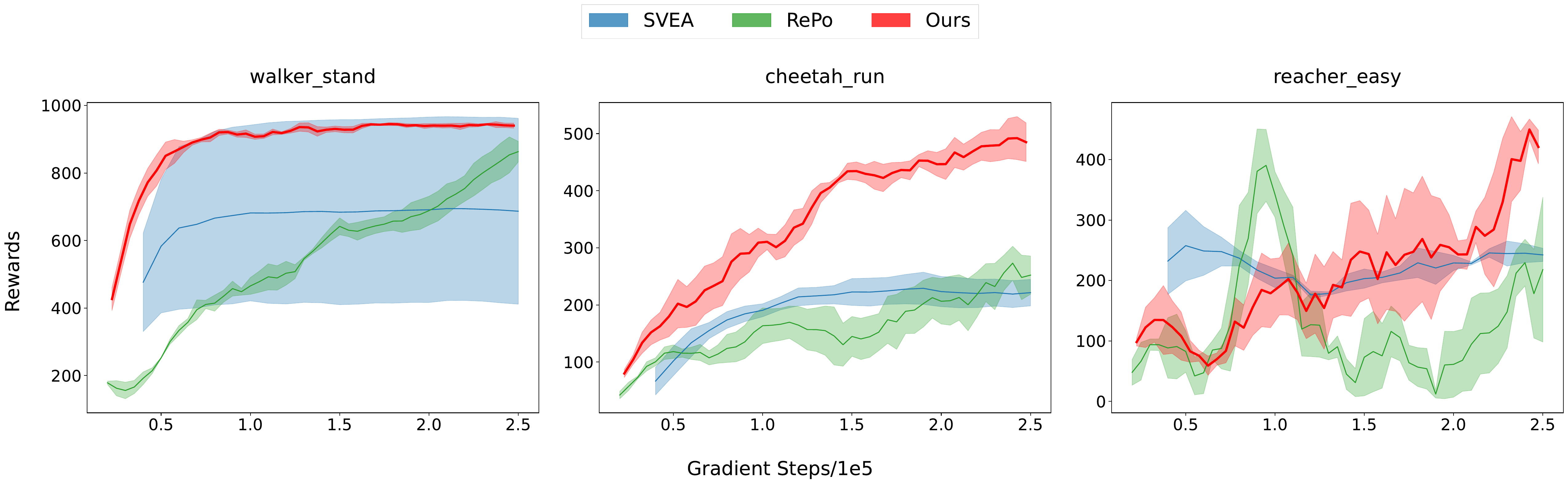}}
  \caption{Performance comparison on Color\_hard in DMC-GS.}
  \label{app:fig:color_hard}
  \end{center}
  \end{figure*}
  
\begin{figure*}[htbp]
  \begin{center}
  \centerline{\includegraphics[width=0.6\textwidth]{./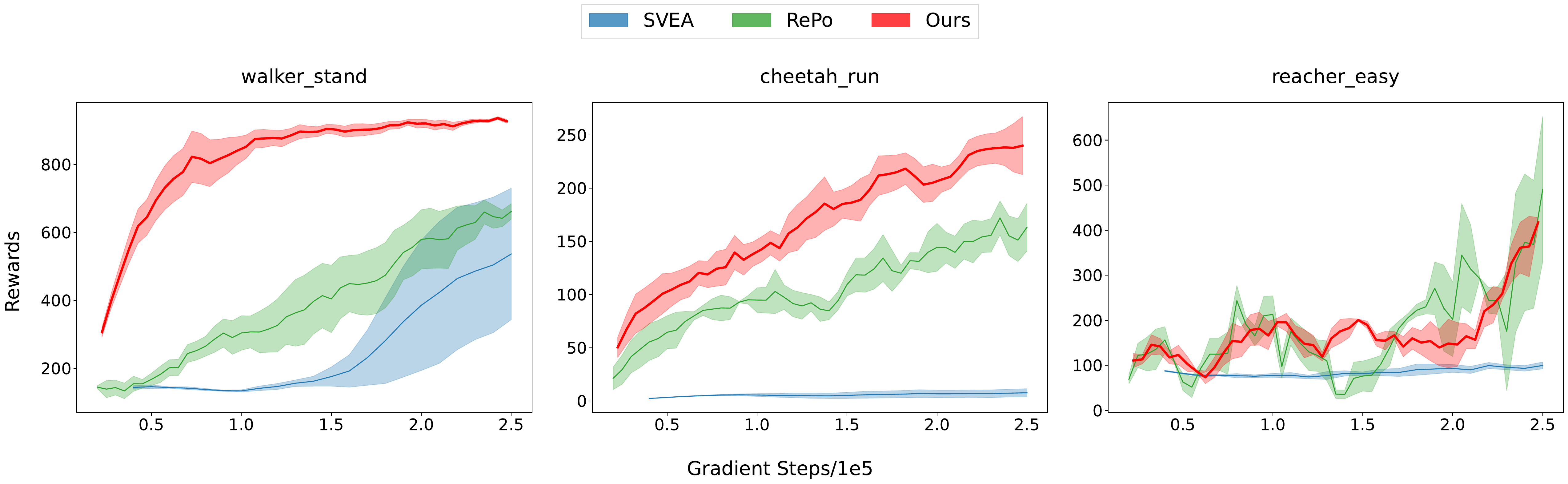}}
  \caption{Performance comparison on Video\_hard in DMC-GS.}
  \label{app:fig:video_hard}
  \end{center}
  \end{figure*}
  
\begin{table}[!htbp]
\caption{Final Performance on Video\_category\_changing. The second column is the final performance on DMC with distraction settings, where the videos come from the same category. The result shows that our model has good generalization ability to adapt to the unseen backgrounds.}
\label{app:tab:video_category_changing}
\centering
\begin{tabular}{l|c c c c c c c c}
\hline
\makecell{Tasks} & \makecell{Rewards on DMC with distraction} & \makecell{Rewards on Video\_category\_changing} \\ 
\hline
\makecell{walker\_stand} & 946 $\pm$ 12 & 963 $\pm$ 11\\
\makecell{walker\_walk} & 877 $\pm$ 35 & 868 $\pm$ 47 \\
\makecell{walker\_run} & 390 $\pm$ 18 & 406 $\pm$ 15\\
\makecell{cheetah\_run} & 652 $\pm$ 47 & 628 $\pm$ 60\\
\makecell{cartpole\_swingup} & 785 $\pm$ 25 & 755 $\pm$ 24 \\
\makecell{reacher\_easy} & 881 $\pm$ 72 & 905 $\pm$ 31 \\
\hline
\end{tabular}
\vspace{-1.em}
\end{table}
  
\begin{figure*}[htbp]
  \begin{center}
  \centerline{\includegraphics[width=0.6\textwidth]{./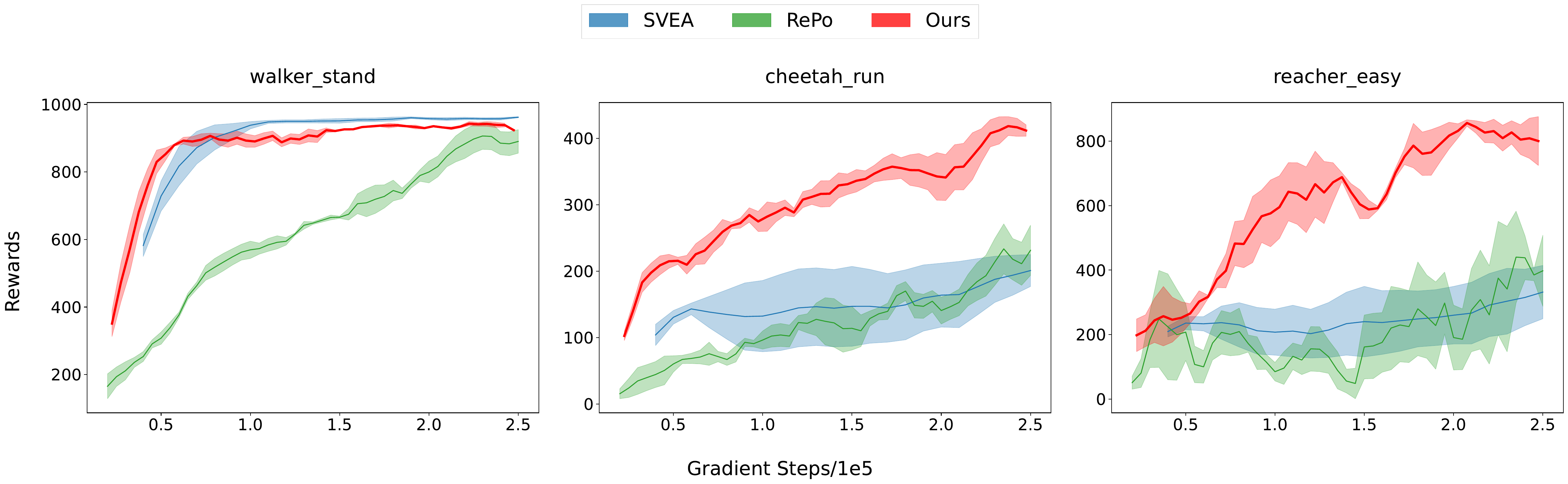}}
  \caption{Performance comparison on Camera\_changing.}
  \label{app:fig:camera_changing}
  \end{center}
  \end{figure*}

\begin{figure*}[htbp]
  \begin{center}
  \centerline{\includegraphics[width=0.6\textwidth]{./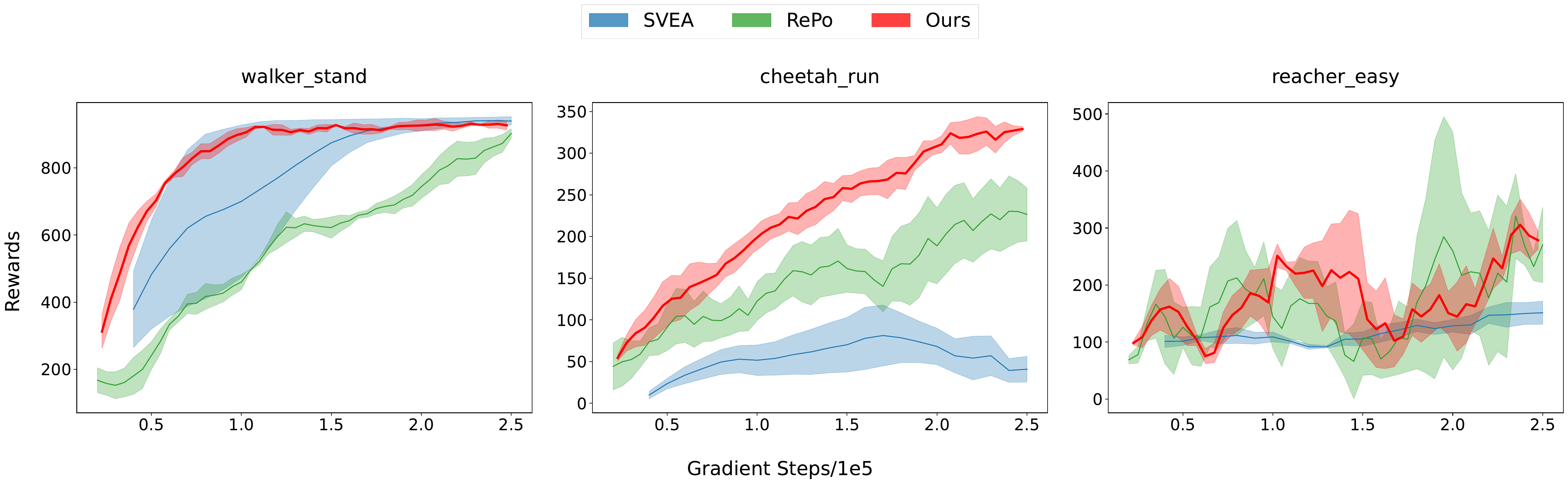}}
  \caption{Performance comparison on Distracting\_CS.}
  \label{app:fig:distracting_cs}
  \end{center}
  \end{figure*}

\subsection{Analysis of Failure Cases and Bottlenecks of HRSSM}
\begin{figure*}[h]
  \begin{center}
  \centerline{\includegraphics[width=0.4\textwidth]{./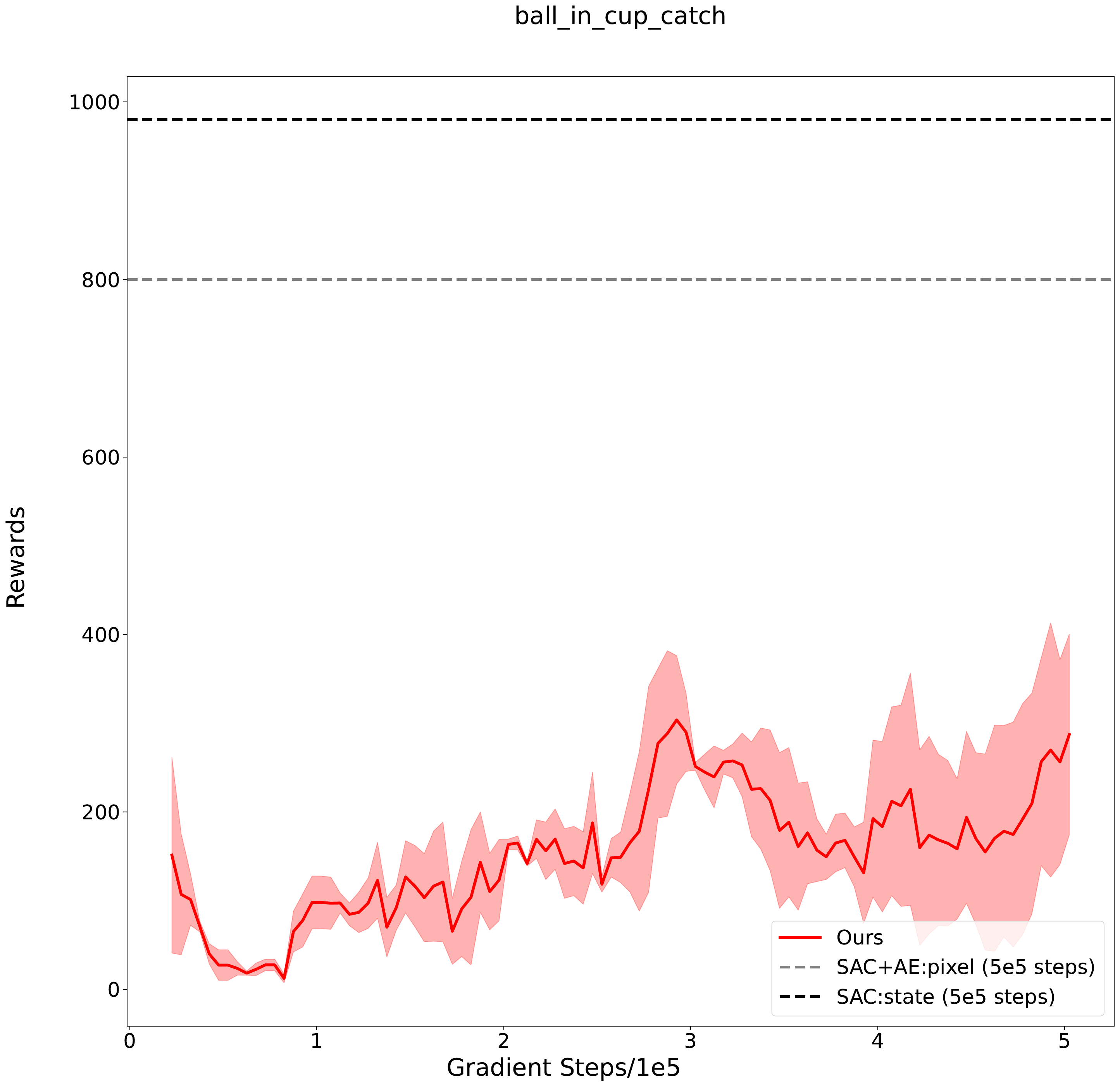}}
  \caption{The performance on ball\_in\_cup\_catch task. The gray dotted line is the final performance of SAC+AE~\cite{yarats2021improving} agent with pixel input (same input as ours) that evaluated at 5e5 gradient steps, and the black dotted line is SAC agent with raw state as input that evaluated at 5e5 gradient steps. In this sparse reward task, the performance of our model is not good as the others.}
  \label{app:fig:bad_case}
  \end{center}
  \end{figure*}
Although our model is effective in most senarios, as illustrated in previous experiments, there still exist cases that our model is not capable of handling well. For instance, the result of \textit{cartpole\_swingup} task in our ablation study show that the final return of the model that only follows bisimulation principle is higher than our entire model, we consider this can be partially attribute to the inappropriate masking. A better masking stategy may help. On the other hand, since our model follows the bisimulation principle, it may fail in sparse reward domains, as the fact that the form of bisimulation computation relies on bootstrapping with respect to the reward function in recursive terms. We present an evaluation on \textit{ball\_in\_cup\_catch} task in Figure~\ref{app:fig:bad_case}. The result substantiates the conclusion that our model does not perform well on this kind of tasks. To address this deficiency, employing goal-conditioned RL techniques or implementing reward re-labeling strategies could potentially offer solutions to improve the performance, we leave this to future work.

\section{Algorithm}

Our traning algorithm is shown in Algorithm~\ref{algorithm:ours}.

\begin{algorithm}[tb]
  \caption{HRSSM}
  \label{algorithm:ours}
  \begin{flushleft}
      \textbf{Require}: The mask encoder $\mathcal{E}_\phi$, the mask posterior model $q_\phi$, the mask recurrent model $f_\phi$, the mask transition predictor $\pp(\hat{z}^m_t | h^m_t)$, their EMA part $\mathcal{E}'_\phi$, $q'_\phi$, $f'_\phi$, $p'_\phi(\hat{z}_t | h_t)$, the reward predictor $\pp(\hat{r}_t | h^m_t,z^m_t)$ and continue predictor $\pp(\hat{c}_t | h^m_t,z^m_t)$, the critic $v_\psi$ and the actor $\pi_\psi$ ; the cube masking function CubeMask(·), the optimizer Optimizer(·, ·).
  \end{flushleft}
  \begin{algorithmic}[1] 
  \STATE Determine the weight of bisimulation loss $\beta$, observation sequence length $K$, mask ratio $\eta$, cube shape $k\times h\times w$, and EMA coefficient $m$.
  \STATE Initialize a replay buffer $\mathcal{D}$.
  \STATE Initialize all parameters.
  \WHILE{train}
  \FOR{update step c = 1...C}
  \STATE // Dynamics learning
  \STATE Sample $B$ data sequences $\{(a_t,o_t,r_t)\}_{t=k}^{k+T-1}$ from replay buffer $\mathcal{D}$\\
  \STATE Cube masking the observation sequence: $\{o^m_t\}_{t=k}^{k+T-1} \leftarrow CubeMask(\{o_t\}_{t=k}^{k+T-1})$
  \STATE Siamese Encoding: $\{e^m_t\}_{t=k}^{k+T-1} \leftarrow \mathcal{E}_\phi(\{o^m_t\}_{t=k}^{k+T-1})$, $\{e_t\}_{t=k}^{k+T-1} \leftarrow \mathcal{E}'_\phi(\{o_t\}_{t=k}^{k+T-1})$\\
  \STATE Compute mask states: $z^m_t \sim \qp(z^m_t | h^m_t,e^m_t)$, $h^m_t = f_\phi(h^m_{t-1},z^m_{t-1},a_{t-1})$, $\hat{z}^m_t \sim \pp(\hat{z}^m_t | h^m_t)$ \\ 
  \STATE Compute true states: $z_t \sim q'_\phi(z_t | h^m_t,e_t)$, $h_t = f'_\phi(h^m_{t-1},z_{t-1},a_{t-1})$, $\hat{z}_t \sim p'_\phi(\hat{z}_t | h_t)$
  \STATE Predict rewards and continuation flags: $\hat{r}_t \sim \pp(\hat{r}_t | h^m_t,z^m_t)$, $\hat{c}_t \sim \pp(\hat{c}_t | h^m_t,z^m_t)$
  \STATE Calculate $\mathcal{L}_{\mathrm{dyn}}$ according to Eq.~\ref{eq:KL loss}
  \STATE Calculate $\mathcal{L_{\mathrm{rec}}}$ according to Eq.~\ref{eq:rec_loss}  
  \STATE Calculate $\mathcal{L}_\text{sim}$ according to Eq.~\ref{eq:beh_loss}
  \STATE Calculate $\mathcal{L_{\mathrm{pred}}}$ according to Eq.~\ref{eq:pred_loss}
  \STATE Calculate total loss $\mathcal{L}(\phi)=\operatorname{E}_{q_{\phi}}\Big{[}\textstyle\sum_{t=1}^{T}\big{(}\mathcal{L}_{\mathrm{dyn}}(\phi)+\mathcal{L}_{\mathrm{rec}}(\phi)+\mathcal{L}_{\mathrm{sim}}(\phi)+\mathcal{L}_{\mathrm{pred}}(\phi)\big{)}\Big{]}$\\
  \STATE Update the encoder's, RSSM's and predictors' parameters:$\mathcal{E}_\phi, q_\phi, f_\phi, p_\phi$ $\leftarrow$ $Optimizer(\mathcal{E}_\phi, q_\phi, f_\phi, p_\phi, \mathcal{L}(\phi))$\\
  \STATE Update the EMA part's parameters:$\mathcal{E}'_\phi \leftarrow m \mathcal{E}_\phi  + (1 - m) \mathcal{E}'_\phi, q'_\phi \leftarrow m q_\phi  + (1 - m) q'_\phi, 
  f'_\phi \leftarrow m f_\phi  + (1 - m) f'_\phi$,\\ $p'_\phi(\hat{z}_t | h_t) \leftarrow m p_\phi(\hat{z}_t | h_t)  + (1 - m) p'_\phi(\hat{z}_t | h_t)$\\
  // Behavior learning
  \STATE Imagine trajectories $\{(s_t,a_t)\}_{t=k}^{k+H-1}$ from each $s_t$.\\
  \STATE Compute rewards $\{r_t\}_{t=k}^{k+H-1}$, continuation flags $\{c_t\}_{t=k}^{k+H-1}$\\
  \STATE Update the actor $\pi_\psi$ and the critic $v_\psi$'s parameters using actor-critic learning.
  
  \ENDFOR
  \STATE Interact with the environment based on the policy\\
  \ENDWHILE
  \end{algorithmic}
  \end{algorithm}

\end{document}